%% file: main.tex
\documentclass{article}

\usepackage[preprint]{neurips_2026}
\usepackage{graphicx}
\usepackage{booktabs}
\usepackage{multirow}
\usepackage{amsmath} 
\usepackage{wrapfig}
\usepackage{algorithm}
\usepackage{algorithmic}
\usepackage{subcaption}
\usepackage{bbm}
\usepackage{amssymb}

\usepackage[utf8]{inputenc} % allow utf-8 input
\usepackage[T1]{fontenc}    % use 8-bit T1 fonts
\usepackage{hyperref}       % hyperlinks
\usepackage{url}            % simple URL typesetting
\usepackage{booktabs}       % professional-quality tables
\usepackage{amsfonts}       % blackboard math symbols
\usepackage{nicefrac}       % compact symbols for 1/2, etc.
\usepackage{microtype}      % microtypography
\usepackage{xcolor}         % colors
\usepackage{booktabs}
\usepackage{multirow}
\usepackage{graphicx}
\usepackage{bm}
\newtheorem{proposition}{Proposition}

\title{When Brain Networks Travel: Learning Beyond Site}

\author{
Yingxu Wang\thanks{Equal contribution} \\
MBZUAI \\
\texttt{yingxv.wang@gmail.com}
\And
Kunyu Zhang\footnotemark[1] \\
Zhengzhou University \\
\texttt{kunyu.zky@gmail.com}
\And
Yanwu Yang \\
University Hospital Tübingen \\
\texttt{yangyanwu1111@gmail.com}
\And
Thomas Wolfers \\
University Hospital Tübingen \\
\texttt{dr.thomas.wolfers@gmail.com}
\And
Yujie Wu \\
The Hong Kong Polytechnic University \\
\texttt{wu-yj16@tsinghua.org.cn}
\And
Siyang Gao \\
City University of Hong Kong  \\
\texttt{siyangao@cityu.edu.hk}
\And
Nan Yin \\
City University of Hong Kong  \\
\texttt{yinnan8911@gmail.com}
}

\def\method{CORE} 

\begin{document}

\maketitle

\begin{abstract}

Graph-based learning on functional magnetic resonance imaging (fMRI) has shown strong potential for brain network analysis. However, existing methods degrade under cross-site out-of-distribution (OOD) settings because site-conditioned confounders induce non-pathological shortcuts, while functional connectivity constructed by temporal averaging obscures transient neurodynamics, limiting generalization to unseen sites. In this paper, we propose \textbf{C}ross-site \textbf{O}OD \textbf{R}obust brain n\textbf{E}twork (\method{}), a unified framework for brain network learning across unseen sites. \method{} first performs site-aware confounder decoupling to mitigate site-conditioned bias and extract a cross-site population scaffold of reproducible diagnostic connectivity edges. It then profiles transient pathway dynamics over this scaffold using lightweight temporal descriptors and organizes scaffold edges into a line graph for transferable pathway-level modeling. Finally, \method{} introduces a prior-guided subject-adaptive gating mechanism that leverages scaffold-derived population priors while preserving subject-specific connectivity variability. Extensive experiments under leave-one-site-out evaluation on real-world datasets (ABIDE, REST-meta-MDD, SRPBS, and ABCD) show that \method{} consistently outperforms state-of-the-art baselines, with up to 6.7\% relative gain. Furthermore, \method{} remains robust to atlas variations, maintaining performance gains across different brain parcellation schemes. 

\end{abstract}

\input{main/1_intro}
\input{main/2_related}
\input{main/4_method}
\input{main/5_experiments}
\input{main/6_conclusion}

\bibliographystyle{plain}
\bibliography{reference}

\input{main/7_appendix}

\end{document}

%% file: main/1_intro.tex
\section{Introduction}

Macroscale connectomics based on fMRI, combined with Graph Neural Networks (GNNs), has emerged as a powerful paradigm for brain network analysis, including applications to Autism Spectrum Disorder (ASD) and Major Depressive Disorder (MDD)~\cite{bessadok2022graph, luo2024graph}. Conventional approaches construct functional connectivity (FC) by quantifying statistical dependencies between Blood Oxygenation Level-Dependent (BOLD) time series across regions of interest (ROIs)~\cite{finn2015functional, kawahara2017brainnetcnn, li2021braingnn}, which allows GNNs to model the brain’s spatial organization and learn representations associated with disease-related patterns, supporting computer-aided diagnosis~\cite{gu2025fchgnn, cui2022interpretable, kan2022fbnetgen}.

\begin{wrapfigure}{r}{0.34\textwidth}
\centering\includegraphics[width=\linewidth]{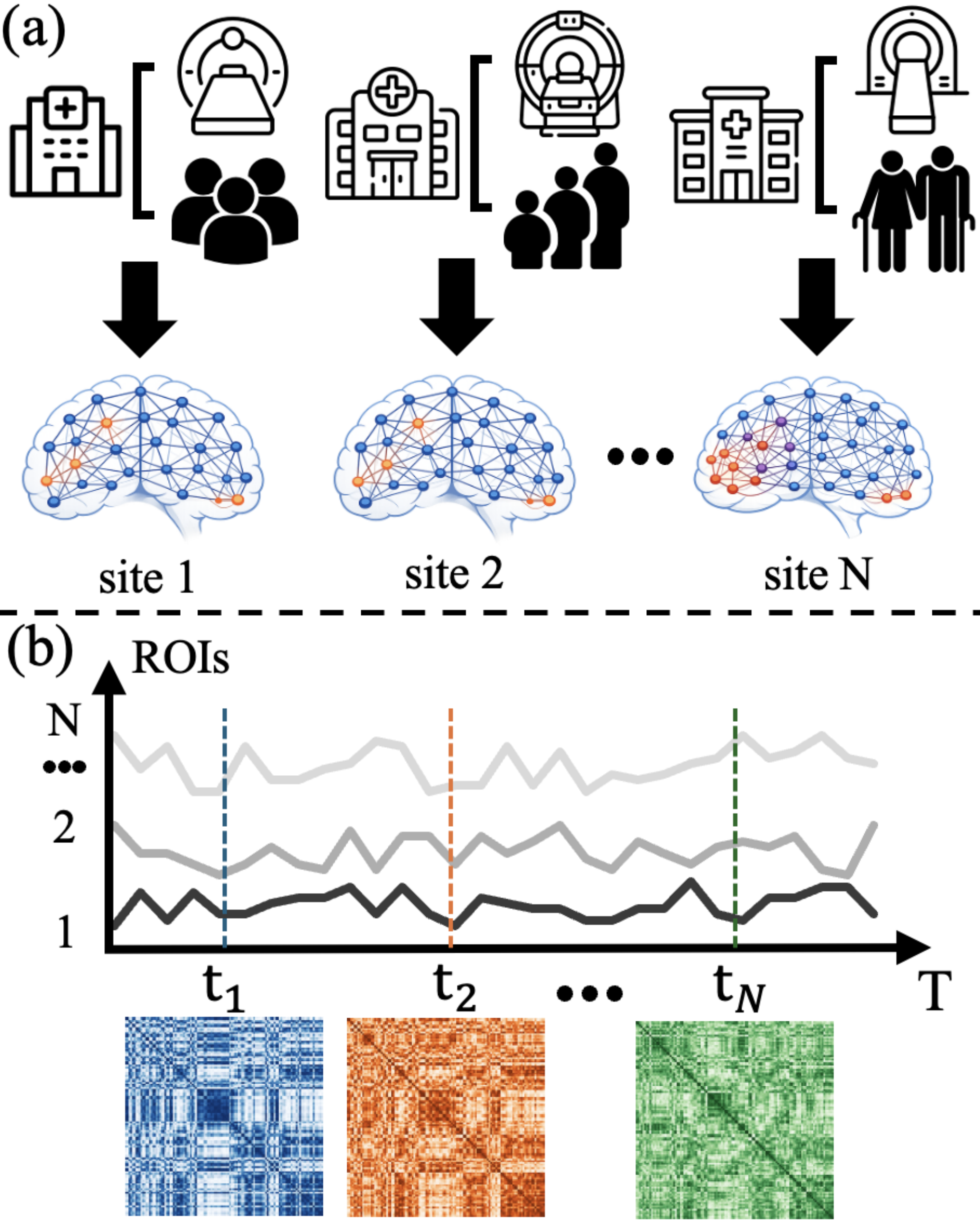}
    \vspace{-0.5cm}
    \caption{(a) Site-conditioned variations. (b) Time-varying FC.}
    \label{fig:figure1}
    \vspace{-0.6cm}
\end{wrapfigure}

Although existing methods achieve strong performance, they degrade substantially under real-world cross-site OOD settings~\cite{qiu2025metaexplainer, qiu2024towards, wang2026usbd, wang2024degree}. This degradation is primarily due to structured, site-conditioned variations in brain network data arising from scanner characteristics and demographic biases~\cite{yamashita2025computational, yu2018statistical,wang2026riemannian}, as illustrated in Fig.~\ref{fig:figure1}(a). Without explicit deconfounding, these site-related, non-pathological cues are easily exploited as predictive shortcuts~\cite{brown2023detecting, chen2022mitigating, xu2025brainood, zhang2026modeling}. Meanwhile, as shown in Fig.~\ref{fig:figure1}(b), time-varying FC reflects neurodynamics with embedded pathological signatures~\cite{allen2014tracking, gupta2026network, wang2026dsbd}. However, most existing methods construct FC by temporally averaging the entire time series, thereby obscuring diagnostically relevant transient brain dynamics. A natural alternative is dynamic modeling, which captures temporal variations but often yields FC patterns that lack cross-site transferability~\cite{ma2024effect,huotari2019sampling, james2019impact}.

In this paper, we formulate cross-site OOD generalization for brain network analysis and aim to develop a unified framework. However, designing such a framework poses three key challenges: 
(1) \textit{How to mitigate site-conditioned confounder bias?} 
A common approach to mitigating confounder bias applies a single global deconfounding step to pooled data across sites, implicitly assuming site-invariant effects. In practice, however, confounding varies across scanners and demographics, leading to systematic residual bias~\cite{xu2025brainood,qiu2025metaexplainer,zhang2026pime}. For instance, the association between age and a fronto-parietal FC edge may differ across sites, so removing a pooled coefficient leaves site-dependent residuals. 
(2) \textit{How to characterize dynamic FC patterns in a transferable manner?} Dynamic FC patterns can capture transient neurodynamics but yield high-dimensional, temporally varying connectivity sequences that are difficult to summarize consistently across subjects, especially when sites differ in scan duration or sampling protocols~\cite{chang2026neurocognitive, huotari2019sampling}. (3) \textit{How to reconcile cross-site population scaffold priors with subject-specific variability?} In real-world scenarios, a key challenge is to leverage population scaffold priors that are stable across sites while preserving subject-specific variability. Although shared regularities support cross-site generalization, disease-related connectivity patterns are inherently individualized and may deviate from population priors~\cite{mueller2013individual}. Enforcing global consistency can suppress informative subject-specific dynamics, whereas fully data-driven modeling makes it difficult to decouple individual patterns from site-specific variations.

To tackle these challenges, we propose \textbf{C}ross-site \textbf{O}OD \textbf{R}obust brain n\textbf{E}twork (\method{}), a unified framework with three components: (i) \textit{Site-aware confounder decoupling} mitigates site-conditioned confounding effects associated with acquisition and demographic
variations. We first estimate site-aware confounder effects via site-wise Huber regression and residualize FC within each training site. We then aggregate site-level diagnostic contrasts and apply bootstrap stability filtering to extract a cross-site population scaffold of reproducible diagnostic connectivity edges, reducing reliance on non-pathological shortcuts. (ii) \textit{Transient pathway profiling} characterizes temporal dynamics beyond static FC. We compute sliding-window FC trajectories over scaffold-selected ROI pairs and summarize them into fixed-dimensional temporal descriptors, which are organized into a line graph for pathway-level modeling of transient neurodynamics. (iii) \textit{Prior-guided subject-adaptive gating} reconciles scaffold-derived population priors with subject-specific variability. We perform gated message passing on the line graph, using the scaffold prior to emphasize subject-relevant pathways while suppressing irrelevant or non-transferable connections. Experiments under leave-one-site-out evaluation on real-world benchmarks (ABIDE, REST-meta-MDD, SRPBS, and ABCD) show that \method{} outperforms baseline methods, with up to a 6.7\% relative gain. Moreover, \method{} remains robust to atlas variations, maintaining improvements across different brain parcellation schemes.

Our contributions are summarized as follows: (1) We formulate cross-site OOD generalization in brain network analysis around three critical challenges: site-conditioned confounding, transferable dynamic FC summarization, and the trade-off between population scaffold priors and subject-specific variability. (2) We propose \method{}, a unified framework that integrates site-aware confounder decoupling, transient pathway profiling, and prior-guided subject-adaptive gating. (3) We conduct experiments on multiple real-world benchmarks, showing that \method{} outperforms baseline methods under cross-site OOD settings and remains robust across different brain parcellation schemes.

%% file: main/2_related.tex
\section{Related Work}

\textbf{Graph Learning for Brain Network Analysis.}
Graph learning has emerged as a dominant paradigm for brain network analysis~\cite{li2021braingnn,thapaliya2025brainrgin,gu2025fchgnn,wang2026sgac}. Most methods construct static FC graphs from ROI-parcellated fMRI by estimating pairwise dependencies over the full time series~\cite{aal}, yielding compact whole-brain representations but obscuring transient, diagnostically relevant dynamics. To address this, dynamic FC and spatio-temporal graph models have been proposed~\cite{hu2024spatio,kong2021spatio,he2024spatiotemporal,zhang2025mvho}; however, they are typically optimized under in-distribution settings and often degrade in cross-site OOD scenarios, where site-specific shifts confound disease-related patterns~\cite{qiu2025metaexplainer,chen2022mitigating,yamashita2025computational,wang2025protomol}. Moreover, full dynamic FC trajectories are high-dimensional and sensitive to acquisition and preprocessing choices (e.g., scan duration, sampling rate, windowing)~\cite{james2019impact,preti2017dynamic,huotari2019sampling}. In contrast, \method{} extracts a cross-site population scaffold of reproducible diagnostic edges and characterizes transient pathway dynamics over this scaffold, producing compact pathway-level representations for cross-site brain network learning.

\textbf{Out-of-Distribution (OOD) Generalization on Graphs.} Graph OOD generalization aims to learn models robust to distribution shifts in graph data~\cite{yao2024empowering,li2025out,li2022learning,wang2026nested}, typically by invariant representations, spurious subgraph pruning, or information bottleneck~\cite{yao2025pruning,yao2025learning}. While effective in molecular and social domains~\cite{zhang2026survey,cai2025out}, these approaches face additional challenges in cross-site brain network analysis, where shifts arise not only from topology or feature variations but also from structured covariates such as acquisition protocols, scanner characteristics, and demographic imbalance~\cite{xu2025brainood,ju2025survey}. These factors induce site-dependent confounder-response relationships and non-pathological shortcuts that are difficult to eliminate with generic invariant learning alone. A recent extension to multi-site brain networks~\cite{xu2025brainood} primarily relies on static FC and does not explicitly disentangle site-conditioned confounder effects. In contrast, \method{} models cross-site shifts as structured, confounder-driven variations and integrates site-aware deconfounding with transient pathway profiling.

%% file: main/4_method.tex
\vspace{-0.2cm}
\section{Methodology}

\begin{figure}
    \centering
    \includegraphics[width=1.00\linewidth]{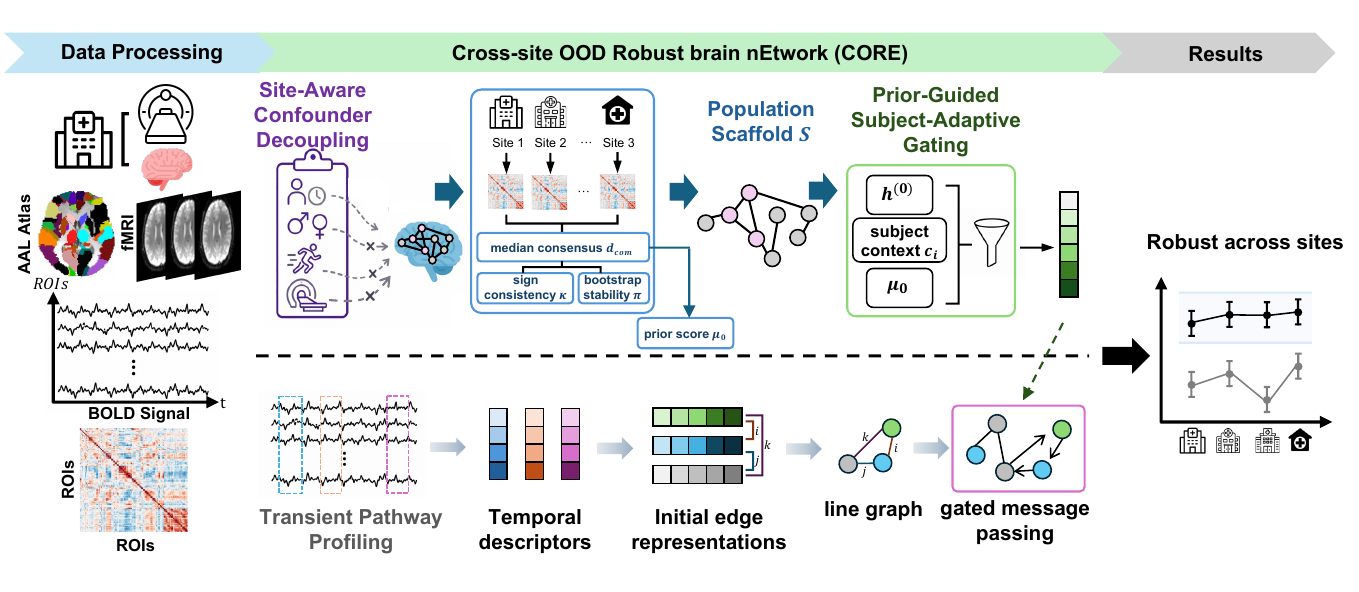}
    \vspace{-0.7cm}
    \caption{Overview of the proposed \method{}. Site-Aware Confounder Decoupling mitigates site-specific effects and extracts a population scaffold of stable connectivity patterns. To capture temporal dynamics, Transient Pathway Profiling summarizes dynamic FC into lightweight temporal descriptors over the scaffold and organizes them into a line graph. Furthermore, Prior-Guided Subject-Adaptive Gating performs gated message passing, which selectively preserves informative connections.}
    \label{fig:framework}
    \vspace{-0.5cm}
\end{figure}

\vspace{-0.2cm}
\textbf{Problem Setup.}
We study cross-site OOD generalization for brain network analysis, where imaging
sites are treated as distinct environments. Let $\mathcal{E}_{\mathrm{all}}$ denote all environments, with training environments $\mathcal{E}_{\mathrm{tr}} \subset \mathcal{E}_{\mathrm{all}}$ and unseen test environments $\mathcal{E}_{\mathrm{te}}=\mathcal{E}_{\mathrm{all}}\setminus \mathcal{E}_{\mathrm{tr}}$. For each $e\in\mathcal{E}_{\mathrm{tr}}$, we observe an empirical dataset $ \widehat{\mathcal{D}}^e = \{(\bm X_i,\bm q_i,Y_i)\}_{i=1}^{N_e}$ sampled from an environment-specific distribution $\mathbb{P}^e$ over $(\bm X,\bm q,Y)$. Here, $\bm X_i\in\mathbb{R}^{T_i\times P}$ denotes the fMRI BOLD time series with variable length $T_i$ over $P$ ROIs, $\bm q_i\in\mathbb{R}^d$ denotes observed structured covariates available at both training and inference, such as age, sex, and acquisition descriptors when available, and $Y_i\in\{0,1\}$ is the binary label, where $0$ and $1$ denote the control and condition groups, respectively.

We consider predictors of the form $f=\rho\circ T_{\widehat{\Theta}}$, where $T_{\widehat{\Theta}}(\bm X,\bm q)$ maps BOLD time series and structured covariates to graph-level representations, and $\rho$ is the classifier. All parameters $\widehat{\Theta}$ are estimated exclusively from training environments. Site identity is used only during training to estimate site-specific confounder effects and extract cross-site scaffold priors. At inference on unseen sites, the model uses frozen parameters learned from the training environments. Our goal is to learn a predictor from $\mathcal{E}_{\mathrm{tr}}$ that
generalizes to unseen environments $\mathcal{E}_{\mathrm{te}}$. We define the OOD
risk over unseen environments as
\begin{equation}
    \mathcal{R}_{\mathrm{OOD}}(f)
    =\sum_{e\in\mathcal{E}_{\mathrm{te}}}
    \frac{1}{|\mathcal{E}_{\mathrm{te}}|} \mathcal{R}(f\mid e),
\end{equation}
where $\mathcal{R}(f\mid e) =\mathbb{E}_{(\bm X,\bm q,Y)\sim\mathbb{P}^e}
    \left[
    l\big(f(\bm X,\bm q),Y\big)
    \right]$.

\textbf{Overview.}
We propose \method{}, a unified framework for cross-site OOD generalization in brain network analysis (Fig.~\ref{fig:framework}). It comprises three components: (i) \textbf{Site-Aware Confounder Decoupling} (Sec.~\ref{sec:decoupling}) estimates site-specific confounder effects via Huber regression on training sites, residualizes FC within each site, and derives a cross-site scaffold of reproducible diagnostic edges through contrast aggregation and bootstrap stability filtering. (ii) \textbf{Transient Pathway Profiling} (Sec.~\ref{sec:profiling}) computes sliding-window FC over scaffold-selected ROI pairs, encodes them into temporal descriptors, and constructs a line graph for pathway-level modeling of transient dynamics. (iii) \textbf{Prior-Guided Subject-Adaptive Gating} (Sec.~\ref{sec:gating}) performs gated message passing on the line graph, leveraging scaffold priors to emphasize subject-relevant pathways while suppressing non-transferable connections.

\subsection{Site-Aware Confounder Decoupling}
\label{sec:decoupling}

Cross-site neuroimaging data exhibit site-conditioned associations between structured confounders and functional connectivity (FC). We first show that a single global deconfounding method is generally insufficient under such cross-site variations.

Let $\bm{r}_i \in \mathbb{R}^M$ denote the vectorized static FC of subject $i$ from site $e$, obtained by stacking the upper-triangular (excluding diagonal) entries of the full-sequence ROI-wise Pearson correlation matrix under a fixed ordering. Let $\mathcal{I} = \{1,\dots,M\}$ denote the corresponding edge index set, where each $j \in \mathcal{I}$ corresponds to an undirected ROI pair $(u_j, v_j)$ with $1 \le u_j < v_j \le P$.  Let $\tilde{\bm{q}}_i = (\bm{q}_i - \bm{\mu}_q)/\bm{\sigma}_q$ denote the standardized confounder vector, where $\bm{\mu}_q, \bm{\sigma}_q \in \mathbb{R}^d$ are the element-wise mean and standard deviation estimated from the pooled training sites. A conventional pooled residualization model computes
\begin{equation}
    \hat{\bm{r}}_i^{\mathrm{glob}}
    =
    \bm{r}_i
    -
    \bm{b}_{\mathrm{glob}}
    -
    \bm{\Gamma}_{\mathrm{glob}}^{\top}\tilde{\bm{q}}_i,
\end{equation}
where $\bm{b}_{\mathrm{glob}} \in \mathbb{R}^{M}$ is a global edge-wise intercept vector, and $\bm{\Gamma}_{\mathrm{glob}} \in \mathbb{R}^{d \times M}$ is a global confounder coefficient matrix shared across sites. 

\vspace{-0.2cm}

\begin{proposition}[Residual Bias of Pooled Deconfounding]
\label{pro1}
Assume the true site-specific generative process of FC is
\begin{equation}
    \bm{r}_i
    =
    \bm{b}_e^{\star}
    +
    \bm{\Gamma}_e^{\star\top}\tilde{\bm{q}}_i
    +
    \bm{s}_i^{\star}
    +
    \bm{\epsilon}_i,
\end{equation}
where $\bm{b}_e^{\star} \in \mathbb{R}^{M}$ and $\bm{\Gamma}_e^{\star} \in \mathbb{R}^{d \times M}$ denote the true site-specific edge-wise intercept vector and confounder coefficient matrix, $\bm{s}_i^{\star} \in \mathbb{R}^{M}$ denotes the non-confounding FC component that may contain condition-related signal, and
$
\mathbb{E}[\bm{\epsilon}_i \mid e,\tilde{\bm{q}}_i,\bm{s}_i^{\star}]
=
\bm{0}.
$
Then, for any fixed global residualization coefficients $(\bm{b}_{\mathrm{glob}},\bm{\Gamma}_{\mathrm{glob}})$, the globally residualized representation satisfies
\begin{equation}
    \mathbb{E}\!\left[
    \hat{\bm{r}}_i^{\mathrm{glob}}
    \mid e, \tilde{\bm{q}}_i, \bm{s}_i^{\star}
    \right]
    =
    \bm{s}_i^{\star}
    +
    (\bm{b}_e^{\star} - \bm{b}_{\mathrm{glob}})
    +
    (\bm{\Gamma}_e^{\star} - \bm{\Gamma}_{\mathrm{glob}})^{\top}
    \tilde{\bm{q}}_i.
\end{equation}
\end{proposition}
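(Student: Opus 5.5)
The plan is a direct substitution argument followed by linearity of conditional expectation. First, we substitute the assumed site-specific generative model for $\bm{r}_i$ into the definition of the pooled residualization $\hat{\bm{r}}_i^{\mathrm{glob}}$. This gives the pathwise identity
\begin{equation*}
\hat{\bm{r}}_i^{\mathrm{glob}}
=
\bm{s}_i^{\star}
+
(\bm{b}_e^{\star}-\bm{b}_{\mathrm{glob}})
+
(\bm{\Gamma}_e^{\star}-\bm{\Gamma}_{\mathrm{glob}})^{\top}\tilde{\bm{q}}_i
+
\bm{\epsilon}_i .
\end{equation*}
Obtaining it only requires grouping the intercept terms together and the confounder terms together, using bilinearity of the transpose-product: $\bm{\Gamma}_e^{\star\top}\tilde{\bm{q}}_i-\bm{\Gamma}_{\mathrm{glob}}^{\top}\tilde{\bm{q}}_i=(\bm{\Gamma}_e^{\star}-\bm{\Gamma}_{\mathrm{glob}})^{\top}\tilde{\bm{q}}_i$.

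Second, we take the conditional expectation given $(e,\tilde{\bm{q}}_i,\bm{s}_i^{\star})$ and apply linearity term by term.
\begin{itemize}
\item $\bm{s}_i^{\star}$ is measurable with respect to the conditioning variables, so it passes through unchanged.
\item $\bm{b}_e^{\star}$ and $\bm{\Gamma}_e^{\star}$ are deterministic functions of the site $e$, hence fixed given $e$.
\item $\bm{b}_{\mathrm{glob}}$ and $\bm{\Gamma}_{\mathrm{glob}}$ are fixed by hypothesis ("for any fixed global coefficients").
\item $\tilde{\bm{q}}_i$ is conditioned on, so the term $(\bm{\Gamma}_e^{\star}-\bm{\Gamma}_{\mathrm{glob}})^{\top}\tilde{\bm{q}}_i$ is also measurable and passes through.
\end{itemize}
The only random remainder is $\mathbb{E}[\bm{\epsilon}_i\mid e,\tilde{\bm{q}}_i,\bm{s}_i^{\star}]$, which is $\bm{0}$ by assumption. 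This yields the claimed expression.

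The only point needing care, which I would treat as the "obstacle," is justifying that the global coefficients may be treated as constants inside the conditional expectation. If they were estimated from the same pooled training sample that includes subject $i$, they would be random and correlated with $\bm{\epsilon}_i$.

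I would handle this by reading the statement literally: the coefficients are fixed, for instance estimated on independent data or viewed as population-level limits. Under that reading they are deterministic and the argument goes through. I would also state that the mean $\bm{\mu}_q$ and scale $\bm{\sigma}_q$ used in $\tilde{\bm{q}}_i$ are likewise treated as fixed.

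Finally, I would remark on the interpretation. Unless $\bm{b}_e^{\star}=\bm{b}_{\mathrm{glob}}$ and $\bm{\Gamma}_e^{\star}=\bm{\Gamma}_{\mathrm{glob}}$ for every site $e$, a site- and covariate-dependent bias remains. No single choice of the global coefficients can cancel it across heterogeneous sites, and this motivates the site-wise regression.
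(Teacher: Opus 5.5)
Your proposal is correct and follows the paper's proof essentially step for step. You substitute the generative model into $\hat{\bm r}_i^{\mathrm{glob}}$, regroup the intercept and confounder terms, and take the conditional expectation using linearity and $\mathbb{E}[\bm\epsilon_i\mid e,\tilde{\bm q}_i,\bm s_i^\star]=\bm 0$; your caveat that the global coefficients must be genuinely fixed, rather than estimated from a sample containing subject $i$, is the reading the paper adopts through its ``for any fixed'' hypothesis.
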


\vspace{-0.1cm}
Proposition~\ref{pro1} shows that pooled residualization induces site-dependent residual bias unless both edge-wise intercepts and confounder–response relationships are invariant across sites. This motivates estimating site-aware confounder effects, rather than relying on a single global deconfounder.

\textbf{(i) Site-Aware Deconfounding.}
For each training site $e \in \mathcal{E}_{\mathrm{tr}}$, we estimate site-specific confounder effects independently for all edges $j \in \mathcal{I}$ using the Huber M-estimator with Iteratively Reweighted Least Squares (IRLS)~\cite{fox2002robust,li1998linear,holland1977robust}. For each edge index $j \in \mathcal{I}$, we solve
\begin{equation}
\label{eq:7}
    (\hat{b}_{e,j}, \hat{\bm{\gamma}}_{e,j})
    =
    \arg\min_{b,\bm{\gamma}}
    \sum_{i \in \mathcal{D}^{e}}
    \mathcal{H}_{\delta}
    \!\left(
        r_{i,j} - b - \tilde{\bm{q}}_i^{\top}\bm{\gamma}
    \right),
\end{equation}
where $\mathcal{H}_{\delta}(\cdot)$ denotes the Huber loss, $\hat{b}_{e,j} \in \mathbb{R}$ is the estimated intercept for edge $j$, and $\hat{\bm{\gamma}}_{e,j} \in \mathbb{R}^{d}$ is the corresponding site-specific confounder coefficient vector. Stacking coefficients across edges yields $
    \hat{\bm{b}}_{e}
    =
    [\hat{b}_{e,1}, \dots, \hat{b}_{e,M}]^{\top},
    \hat{\bm{\Gamma}}_{e}
    =
    [\hat{\bm{\gamma}}_{e,1}, \dots, \hat{\bm{\gamma}}_{e,M}]
    \in \mathbb{R}^{d \times M}.$
For scaffold estimation, we first remove site-specific nuisance trends from
training subjects
\begin{equation}
\label{eq:site_res}
    \bm{r}_i^{\mathrm{site}}
    =
    \bm{r}_i
    -
    \hat{\bm{b}}_{e}
    -
    \hat{\bm{\Gamma}}_{e}^{\top}\tilde{\bm{q}}_i,
    \qquad i\in\widehat{\mathcal D}^{e}.
\end{equation}
At inference time in an unseen environment, we do not estimate a target-specific deconfounder from target-site samples. Instead, we aggregate the deconfounders learned from the training sites into a site-agnostic estimator
\begin{equation}
\bar{\bm{b}} = \frac{1}{|\mathcal{E}_{\mathrm{tr}}|} \sum_{e \in \mathcal{E}_{\mathrm{tr}}}\hat{\bm{b}}_{e}, \quad 
    \bar{\bm{\Gamma}} = \frac{1}{|\mathcal{E}_{\mathrm{tr}}|} \sum_{e \in \mathcal{E}_{\mathrm{tr}}}\hat{\bm{\Gamma}}_{e},
    \end{equation} where equal-site averaging yields a site-balanced deconfounder and avoids dominance by
large training sites. To ensure consistency between the training and inference stages, we use the frozen site-agnostic deconfounder for unseen subjects during inference: 
\vspace{-0.1cm}
\begin{equation}
\label{eq:9}
    \bm{r}_i^{\mathrm{res}} = \bm{r}_i - \bar{\bm{b}} - \bar{\bm{\Gamma}}^{\top}\tilde{\bm{q}}_i.
\end{equation}
Although site-specific residualization mitigates confounding, the residual FCs remain high-dimensional and may include weak or unstable edges. For cross-site OOD generalization, removing nuisance variation alone is insufficient; we must also identify connections with consistent diagnostic effects across sites. We therefore distill a cross-site population scaffold from the deconfounded FCs.

\textbf{(ii) Cross-Site Scaffold Extraction.} Based on the site-wise deconfounded training features
$\bm r_i^{\mathrm{site}}$, we extract a population scaffold
$\mathcal S\subseteq\{1,\dots,M\}$ consisting of diagnostic edges that are
reproducible across training sites. For each training site $e$, let $\mathcal{D}_{e}^{(1)}$ and $\mathcal{D}_{e}^{(0)}$ denote the case and control subsets, respectively. We define the site-level diagnostic contrast as $
    \bm{d}_{e}
    =
    \bm{\mu}_{e}^{(1)} - \bm{\mu}_{e}^{(0)}$, where $\bm{\mu}_{e}^{(1)},\bm{\mu}_{e}^{(0)}\in\mathbb{R}^{M}$ are the Huber robust means of $\bm{r}_i^{\mathrm{site}}$ over $\mathcal{D}_{e}^{(1)}$ and $\mathcal{D}_{e}^{(0)}$, respectively.

To obtain a site-balanced consensus contrast, we aggregate $\{\bm{d}_{e}\}_{e\in\mathcal{E}_{\mathrm{tr}}}$ via component-wise median,
\begin{equation}
    \bm{d}_{\mathrm{com}}
    =
    \arg\min_{\bm{u}\in\mathbb{R}^{M}}
    \sum_{e\in\mathcal{E}_{\mathrm{tr}}}
    \|\bm{d}_{e}-\bm{u}\|_{1}.
\end{equation}
Thus, each component $d_{\mathrm{com},j}$ equals the median of $\{d_{e,j}\}_{e\in\mathcal{E}_{\mathrm{tr}}}$, which avoids dominance by large sites. We further quantify edge reliability using cross-site sign consistency
$\kappa_j$ and bootstrap stability $\pi_j$:
\vspace{-0.3cm}
\begin{equation}
\label{eq:11}
\kappa_j =
\frac{1}{|\mathcal{E}_{\mathrm{tr}}|}
\sum_{e\in\mathcal{E}_{\mathrm{tr}}}
\mathbb{I}\!\left[
\operatorname{sgn}(d_{e,j})
=
\operatorname{sgn}(d_{\mathrm{com},j})
\right],
\quad
\pi_j =
\frac{1}{B}
\sum_{b=1}^{B}
\mathbb{I}\!\left[
\operatorname{sgn}(d^{(b)}_{\mathrm{com},j})
=
\operatorname{sgn}(d_{\mathrm{com},j})
\right].
\end{equation}
Here, $d_{\mathrm{com},j}^{(b)}$ is computed by bootstrapping the training sites with replacement and recomputing the weighted median consensus contrast. We define
$\operatorname{sgn}(0)=0$. The final cross-site population scaffold is
\begin{equation}
\label{eq:12}
    \mathcal S
    =
    \left\{
    j \in \{1,\dots,M\}
    \;\middle|\;
    |d_{\mathrm{com},j}|>\tau_{E},\;
    \kappa_{j}\ge\eta_{E},\;
    \pi_{j}\ge\zeta_{E}
    \right\},
\end{equation}
where $\tau_E$, $\eta_E$, and $\zeta_E$ are selected by inner validation on the
training sites.

\begin{proposition}[Margin Stability of Site-aware Scaffold]
\label{prop:shortcut_suppression}
For any edge $j\in\mathcal I$, let
$r^\star_{i,j}=r_{i,j}-b^\star_{e,j}-\tilde{\bm q}_i^\top\bm\gamma^\star_{e,j}$
be the oracle site-wise residual, and let
$d^\star_{e,j}$, $d^\star_{\mathrm{com},j}$,
$d^{\star(b)}_{\mathrm{com},j}$, $\kappa^\star_j$, and $\pi^\star_j$
be the corresponding oracle scaffold statistics. Define the nuisance-estimation perturbation
\begin{equation}
\Delta_j
=
\max_{e\in\mathcal E_{\mathrm{tr}}}
\sum_{y\in\{0,1\}}
\sup_{i\in\widehat{\mathcal D}^{(y)}_e}
\left|
(\hat b_{e,j}-b^\star_{e,j})
+
\tilde{\bm q}_i^\top
(\hat{\bm\gamma}_{e,j}-\bm\gamma^\star_{e,j})
\right|,
\end{equation}
and the oracle margin
\vspace{-0.2cm}
\begin{equation}
m^\star_j
=
\min
\left\{
|d^\star_{\mathrm{com},j}|,
\left||d^\star_{\mathrm{com},j}|-\tau_E\right|,
\min_{e\in\mathcal E_{\mathrm{tr}}}|d^\star_{e,j}|,
\min_{1\le b\le B}|d^{\star(b)}_{\mathrm{com},j}|
\right\}.
\end{equation}
Let
\begin{equation}
\mathcal S^\star
=
\left\{
j\in\mathcal I:
|d^\star_{\mathrm{com},j}|>\tau_E,\;
\kappa^\star_j\ge\eta_E,\;
\pi^\star_j\ge\zeta_E
\right\}.
\end{equation}
If $\Delta_j<m^\star_j$, then
\begin{equation}
\mathbf 1\{j\in\mathcal S\}
=
\mathbf 1\{j\in\mathcal S^\star\}.
\end{equation}
\end{proposition}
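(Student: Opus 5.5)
Our plan is a perturbation argument: we show that every scaffold statistic entering Eq.~(\ref{eq:12}) moves by at most $\Delta_j$ when the oracle residuals $r^\star_{i,j}$ are replaced by the estimated residuals $r^{\mathrm{site}}_{i,j}$. We then note that $m^\star_j$ is exactly the distance from each oracle statistic to the nearest decision boundary (zero for the signs, $\tau_E$ for the magnitude threshold). Hence no sign or threshold comparison can flip.

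\textbf{Step 1 (perturbation of site contrasts).} For a training subject $i$ in site $e$ with label $y$, Eq.~(\ref{eq:site_res}) gives
\[
r^{\mathrm{site}}_{i,j}-r^\star_{i,j} = -\bigl[(\hat b_{e,j}-b^\star_{e,j})+\tilde{\bm q}_i^\top(\hat{\bm\gamma}_{e,j}-\bm\gamma^\star_{e,j})\bigr] =: -\delta_{i,j}.
\]
Write $\varepsilon_{e,y}=\sup_{i\in\widehat{\mathcal D}^{(y)}_e}|\delta_{i,j}|$. We use two properties of the Huber location estimator $\mu(\cdot)$: translation equivariance and monotonicity. The estimating equation $\sum_i\psi_\delta(x_i-\mu)=0$ has $\psi_\delta$ nondecreasing, so if $x_i\le x'_i$ for all $i$, then $\mu(x)\le\mu(x')$. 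Sandwiching $x^\star_i-\varepsilon_{e,y}\le x_i\le x^\star_i+\varepsilon_{e,y}$ therefore gives $|\mu^{(y)}_{e,j}-\mu^{\star(y)}_{e,j}|\le\varepsilon_{e,y}$. Consequently
\[
|d_{e,j}-d^\star_{e,j}|\le\varepsilon_{e,1}+\varepsilon_{e,0}\le\Delta_j \quad\text{for every } e.
\]
This explains why $\Delta_j$ sums over $y$ and maximizes over $e$.

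\textbf{Step 2 (consensus and bootstrap contrasts).} The coordinatewise median, and any weighted median, is monotone and translation-equivariant in its inputs. So it is $1$-Lipschitz in the sup norm:
\[
|d_{\mathrm{com},j}-d^\star_{\mathrm{com},j}|\le\max_e|d_{e,j}-d^\star_{e,j}|\le\Delta_j .
\]
Each bootstrap replicate $d^{(b)}_{\mathrm{com},j}$ is a (weighted) median of a resampled multiset of the same site contrasts. We couple it with the oracle replicate through the same resampled site indices, which is implicit in the statement, and conclude $|d^{(b)}_{\mathrm{com},j}-d^{\star(b)}_{\mathrm{com},j}|\le\Delta_j$.

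If the median is defined as a set-valued argmin for an even number of sites, we fix a monotone selection such as the midpoint. This is the one point where a convention must be stated; we expect it to be the main technical care point.

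\textbf{Step 3 (no flips).} Take a real $x^\star$ with $|x^\star|\ge m^\star_j>\Delta_j$ and $|x-x^\star|\le\Delta_j$. Then $\operatorname{sgn}(x)=\operatorname{sgn}(x^\star)$, and both are nonzero.
\begin{itemize}
\item Applied to $d_{e,j}$, $d_{\mathrm{com},j}$ and $d^{(b)}_{\mathrm{com},j}$, this shows every indicator in Eq.~(\ref{eq:11}) is unchanged. Hence $\kappa_j=\kappa^\star_j$ and $\pi_j=\pi^\star_j$ exactly, and the conditions $\kappa_j\ge\eta_E$ and $\pi_j\ge\zeta_E$ agree with their oracle counterparts.
\item For the magnitude condition, $\bigl||d_{\mathrm{com},j}|-|d^\star_{\mathrm{com},j}|\bigr|\le\Delta_j<\bigl||d^\star_{\mathrm{com},j}|-\tau_E\bigr|$. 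So $|d_{\mathrm{com},j}|$ lies strictly on the same side of $\tau_E$ as $|d^\star_{\mathrm{com},j}|$.
\end{itemize}
All three membership conditions of Eq.~(\ref{eq:12}) thus coincide with those defining $\mathcal S^\star$, which yields $\mathbf 1\{j\in\mathcal S\}=\mathbf 1\{j\in\mathcal S^\star\}$.
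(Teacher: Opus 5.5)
Your proposal is correct and follows essentially the same route as the paper's proof. Both show that the Huber location and the (weighted) median are non-expansive by combining monotonicity with translation equivariance, couple each bootstrap replicate to its oracle counterpart through the same resampling multiplicities, and then use the margin $m^\star_j>\Delta_j$ to rule out any sign flip or crossing of $\tau_E$. Your explicit monotone selection for a non-unique median is a small clarification that the paper leaves implicit in the phrase ``canonical weighted median''; the same convention is also needed for the Huber location, whose argmin can be an interval when all residuals lie in the linear part of the loss.
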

\vspace{-0.2cm}
Proposition~\ref{prop:shortcut_suppression} establishes a deterministic margin-stability guarantee for scaffold extraction: if site-wise nuisance-estimation perturbations are bounded below the oracle diagnostic-contrast and sign-stability margins, the empirical scaffold derived from estimated residuals coincides with the oracle scaffold defined by the true site-specific nuisance parameters.

\subsection{Transient Pathway Profiling}
\label{sec:profiling}

Static FC constructed via temporal averaging discards transient neurodynamics. A natural alternative is to model full dynamic FC trajectories; however, these trajectories are high-dimensional, length-dependent, and difficult to compare consistently across subjects and sites~\cite{chang2026neurocognitive,huotari2019sampling}. To reduce this burden, we restrict temporal modeling to the cross-site scaffold
$\mathcal S$ and summarize edge-wise temporal variation using fixed-dimensional
descriptors.

Let $M_{\mathcal S} = |\mathcal S|$, and let $(j_p)_{p=1}^{M_{\mathcal S}}$ be a fixed enumeration of $\mathcal S$, where each index $j_p \in \mathcal S$ corresponds to the ROI pair $(u_{j_p}, v_{j_p})$ under a canonical ordering. For each subject $i$, we partition the BOLD time series $\bm X_i \in \mathbb R^{T_i \times P}$ into sliding windows of length $W$ with stride $S_w$, yielding $L_i = \left\lfloor \frac{T_i - W}{S_w} \right\rfloor + 1$. For each $j \in \mathcal S$ and segment $t$, let $\bm x_{i,u_j}^{(t)}, \bm x_{i,v_j}^{(t)} \in \mathbb R^W$ denote the windowed signals from ROIs $u_j$ and $v_j$, respectively. We compute the windowed Pearson correlation $\rho_{i,j}^{(t)} = \operatorname{corr}(\bm x_{i,u_j}^{(t)}, \bm x_{i,v_j}^{(t)})$, followed by the Fisher-$z$ transformation $c_{i,j}^{(t)} = \frac{1}{2} \log \frac{1 + \rho_{i,j}^{(t)}}{1 - \rho_{i,j}^{(t)}}$.

To obtain a fixed-dimensional summary of the temporal sequence $\{c_{i,j}^{(t)}\}_{t=1}^{L_i}$, we compute two lightweight descriptors. Let $
    \bar{c}_{i,j}
    =
    \frac{1}{L_i}
    \sum_{t=1}^{L_i} c_{i,j}^{(t)}$ denote the temporal mean. We then define temporal volatility $s_{i,j}$ and dynamic flexibility $f_{i,j}$ as follows:
\begin{equation}
\label{eq:descriptor}
    s_{i,j}
    =
    \sqrt{
        \frac{1}{L_i}
        \sum_{t=1}^{L_i}
        \left(c_{i,j}^{(t)} - \bar{c}_{i,j}\right)^2
    },
    \quad
    f_{i,j}
    =
    \max_{1 \le t \le L_i} c_{i,j}^{(t)}
    -
    \min_{1 \le t \le L_i} c_{i,j}^{(t)}.
\end{equation}
These two dynamic descriptors map subjects with variable-length sequences into a common temporal feature space. As both $s_{i,j}$ and $f_{i,j}$ capture within-trajectory variation, they are invariant to time-constant additive offsets in the windowed FC trajectories~\cite{yu2018statistical,chen2022mitigating}. Then, for each scaffold node $p \in \{1,\dots,M_{\mathcal S}\}$ corresponding to edge $j_p \in \mathcal S$, we define the initial edge representation as
\begin{equation}
\label{eq:node_feature}
    \bm h_{i,p}^{(0)}
    =
    \left[
        r_{i,j_p}^{\mathrm{site}}, 
        \log(s_{i,j_p}+\varepsilon),
        \log(f_{i,j_p}+\varepsilon)
    \right]^{\top},
\end{equation}
where $\varepsilon>0$ is a small constant for numerical stability.

While these edge-level descriptors capture transient dynamics, modeling them independently ignores interactions among pathways that share brain regions. To address this, we construct a line graph $\mathcal G_{\mathrm{line}} = (\mathcal V, \mathcal E_{\mathrm{line}}, \bm A)$, where the node set is $\mathcal V =\{1,\dots,M_{\mathcal S}\}$ and each node $p$ corresponds to a scaffold edge $j_p$. Two nodes $p, q \in \mathcal V$ are connected if their associated ROI pairs share at least one brain region, i.e., ${u_{j_p}, v_{j_p}} \cap {u_{j_q}, v_{j_q}} \neq \emptyset$.

To further incorporate cross-site scaffold priors, we weight the line-graph connections using the consensus diagnostic contrast. Let $\bm d_{\mathrm{com},\mathcal S}
=
[d_{\mathrm{com},j_1}, \dots, d_{\mathrm{com},j_{M_{\mathcal S}}}]^{\top}
\in \mathbb R^{M_{\mathcal S}}$ denote the restriction of $\bm d_{\mathrm{com}}$ to scaffold edges under the enumeration $(j_p)_{p=1}^{M_{\mathcal S}}$. Using the absolute contrast magnitudes, define $\bm a = |\bm d_{\mathrm{com},\mathcal S}|$ with $a_p = |d_{\mathrm{com},j_p}|$. We normalize these scores as $\bm \mu_0
=
\frac{\bm a - \bar a \bm 1}{\sigma_a + \varepsilon_{\sigma}},$ where
\begin{equation}
\bar a = \frac{1}{M_{\mathcal S}} \sum_{p=1}^{M_{\mathcal S}} a_p, \quad \sigma_a = \sqrt{\frac{1}{M_{\mathcal S}} \sum_{p=1}^{M_{\mathcal S}} (a_p - \bar a)^2},
\end{equation}
are the mean and standard deviation of the absolute contrast magnitudes, and $\varepsilon_{\sigma} > 0$ is a small constant for numerical stability. The weighted line-graph adjacency matrix $\bm A \in \mathbb R^{M_{\mathcal S} \times M_{\mathcal S}}$ is then defined as follows:
\vspace{-0.08cm}
\begin{equation}
\label{eq:linegraph}
    A_{pq}
    =
    \begin{cases}
      \exp\!\left(
        -\dfrac{|\mu_{0,p}-\mu_{0,q}|}{\Delta_{\mu_0}+\varepsilon_A}
      \right),
      & \text{if } p \neq q \text{ and } \{u_{j_p},v_{j_p}\} \cap \{u_{j_q},v_{j_q}\} \neq \emptyset, \\
      0, & \text{otherwise},
    \end{cases}
\end{equation}
where $
\Delta_{\mu_0}
=
\max_{1\le p\le M_{\mathcal S}} \mu_{0,p}
-
\min_{1\le p\le M_{\mathcal S}} \mu_{0,p}$, and $\varepsilon_A>0$ prevents division by zero. 

Finally, we add self-loops $\hat{\bm A} = \bm A + \bm I$ and apply symmetric normalization to obtain the propagation matrix $\tilde{\bm{A}} =\hat{\bm D}^{-1/2}\hat{\bm A}\hat{\bm D}^{-1/2}$, where $\hat{\bm D}$ is the degree matrix of $\hat{\bm A}$.

\subsection{Prior-Guided Subject-Adaptive Gating}
\label{sec:gating}

The line graph provides a shared scaffold-level propagation topology but does not indicate which pathways are relevant for an individual subject. In practice, condition-related connectivity varies across individuals, and propagating over all scaffold nodes can introduce irrelevant or non-transferable signals~\cite{mueller2013individual}. To address this, we propose a prior-guided, subject-adaptive gating mechanism that modulates message passing per subject while prioritizing scaffold-reliable pathways.

Given the initial node features $\{\bm h_{i,p}^{(0)}\}_{p=1}^{M_{\mathcal S}}$ in line graph $\mathcal G_{\mathrm{line}}$, we first compute the subject-level context by pooling the scaffold-node features:
\vspace{-0.1cm}
\begin{equation}
\label{eq:16}
    \bar{\bm h}_{i}^{(0)}
    =
    \frac{1}{M_{\mathcal S}}
    \sum_{p=1}^{M_{\mathcal S}}
    \bm h_{i,p}^{(0)},
    \quad
    \bm c_i
    =
    \phi\!\left(\bar{\bm h}_{i}^{(0)}\right),
\end{equation}
where $\phi(\cdot)$ is a multi-layer perceptron (MLP). For each scaffold node $p \in \{1,\dots,M_{\mathcal S}\}$ corresponding to edge $j_p \in \mathcal S$, we combine the local node feature $\bm h_{i,p}^{(0)}$ with the global context $\bm c_i$ to compute a data-driven logit, and further modulate it using the scaffold-derived prior score $\mu_{0,p}$:
\begin{equation}
\label{eq:gate}
    \ell_{i,p}
    =
    \psi\!\left(
        \left[
            \bm h_{i,p}^{(0)}
            \,\|\, 
            \bm c_i
        \right]
    \right)
    +
    \lambda \mu_{0,p},
    \quad
    g_{i,p}
    =
    \sigma\!\left(\frac{\ell_{i,p}}{\tau}\right),
\end{equation}
where $\psi(\cdot)$ is an MLP that outputs a scalar logit,
$\lambda\ge0$ controls the strength of the scaffold prior, $\tau>0$ is a
temperature parameter, and $g_{i,p}\in(0,1)$ denotes the subject-adaptive gate
for scaffold node $p$. We then perform gated message passing on the line graph. At layer $l$, the representation of node $p$ is updated by
\vspace{-0.2cm}
\begin{equation}
\label{eq:graph_update}
\bm h_{i,p}^{(l)} = \bm h_{i,p}^{(l-1)} + \mathcal F^{(l)}\!(\bm h_{i,p}^{(l-1)}, \sum_q \tilde{A}_{pq}\, g_{i,q}\, \bm h_{i,q}^{(l-1)}),
\end{equation}
where $\mathcal F^{(l)}(\cdot)$ is a learnable update function. Finally, we aggregate the refined node representations into a graph-level feature using a gated readout \begin{equation}
\label{eq:graph_level}
\bm z_i =
\frac{
\sum_{p=1}^{M_{\mathcal S}} g_{i,p}\bm h_{i,p}^{(L)}
}{
\sum_{p=1}^{M_{\mathcal S}} g_{i,p} + \varepsilon_z
},  
\end{equation}
where $\varepsilon_z>0$ is a small constant for numerical stability. 

\vspace{-0.1cm}
\subsection{Learning Objective}
\label{sec:objective}

Given the subject-adaptive graph representation $\bm z_i$, we predict the label via a classifier $\hat y_i = \rho(\bm z_i)$, where $\rho(\cdot)$ is an MLP classifier. Let $
\hat{\mathcal D}_{\mathrm{tr}}
=
\bigcup_{e \in \mathcal E_{\mathrm{tr}}}
\hat{\mathcal D}^{e}$ denote the pooled empirical training set. The classification loss is defined as 
\begin{equation}
    \mathcal L_{\mathrm{cls}}
    =
    \mathbb E_{(\bm X_i,\bm q_i,Y_i)\sim \hat{\mathcal D}_{\mathrm{tr}}}
    \big[
        \ell_{\mathrm{CE}}(\hat y_i, Y_i)
    \big].
\end{equation}

To encourage compact subject-adaptive pathway selection, we impose a soft gate-budget regularization over scaffold nodes:
\vspace{-0.3cm}
\begin{equation}
    \mathcal L_{\mathrm{sparse}}
    =
    \mathbb E_{(\bm X_i,\bm q_i,Y_i)\sim \hat{\mathcal D}_{\mathrm{tr}}}
    \left[
        \left|
            \sum_{p=1}^{M_{\mathcal S}} g_{i,p}
            - K
        \right|
    \right],
\end{equation}
where $K\in(0,M_{\mathcal S}]$ is the target gate budget. The final objective is
\begin{equation}
\label{eq:final}
    \mathcal L
    =
    \mathcal L_{\mathrm{cls}}
    +
    \gamma \mathcal L_{\mathrm{sparse}},
\end{equation}
where $\gamma\ge0$ controls the strength of the gate-budget penalty. 

%% file: main/5_experiments.tex
\vspace{-0.2cm}
\section{Experiments}
\vspace{-0.1cm}
\subsection{Experimental Settings}

\textbf{Datasets.} To evaluate \method{} under cross-site OOD settings, we conduct experiments on four real-world fMRI datasets: ABIDE~\cite{abide}, REST-meta-MDD~\cite{restmeatmdd}, SRPBS~\cite{srpbs}, and ABCD~\cite{abcd}, covering ASD, MDD, and ADHD-related prediction in a large-scale developmental cohort. These datasets exhibit substantial cross-site variation in subject demographics and acquisition protocols. In addition to the standard ABIDE setting, we construct an ABIDE (CC200) variant using the CC200 functional parcellation to assess robustness to atlas variation. Further dataset details are provided in Appendix~\ref{app:data}.

\textbf{Baselines.} We compare the proposed \method{} with a comprehensive set of  baselines on the above datasets under the same cross-site OOD settings, spanning four categories: (1) general graph neural networks (GNNs), including GCN~\cite{kipf2016gcn}, GAT~\cite{petar2018gat}, and GIN~\cite{xu2018gin}; (2) general OOD methods, such as IRM~\cite{arjovsky2019irm} and CORAL~\cite{sun2016coral}; (3) graph OOD methods, including GSAT~\cite{miao2022gsat}, DisC~\cite{fan2022disc}, CEPG~\cite{wang2026cepg}, and DiSCO~\cite{sun2026disco}; and (4) brain networks, including BrainNetTF~\cite{kan2022brainnttf}, AGMGC~\cite{noman2025agmgc}, FC-HGNN~\cite{gu2025fchgnn}, XG-GNN~\cite{qiu2024towards}, DeCI~\cite{yu2026deci}, and BrainOOD~\cite{xu2025brainood}. More details of baselines are provided in Appendix~\ref{app:baseline}.

% \textbf{Implementation Details.} 
% We implement \method{} in PyTorch~\footnote{https://anonymous.4open.science/r/CORE-35A2/} and run experiments on NVIDIA 5090 GPUs. We adopt a standard GCN~\cite{kipf2016gcn} backbone with two layers and hidden size 64. The model is trained using Adam with a learning rate of $1\times10^{-3}$, and weight decay of $5\times10^{-4}$. For each subject, transient connectivity profiles are computed using a sliding window of length $W=30$ and stride $S_w=5$. The gate sparsity target is set to $K=80$ for ABIDE, REST-meta-MDD, and SRPBS, and $K=150$ for ABCD, with $\gamma=5\times10^{-4}$. Scaffold thresholds are selected via grid search: $\tau_E$ is tuned over the 70th–90th percentiles, and $\eta_E, \zeta_E \in \{0.7, 0.75, 0.8, 0.85, 0.9\}$, with the default set to the 80th percentile, $\eta_E=0.75$, and $\zeta_E=0.70$. We adopt a leave-one-site-out (LOSO) protocol, where each site is held out once as the unseen test site and the remaining sites are used for training~\cite{xu2025brainood}. All hyperparameters are selected by inner LOSO validation over the training sites, and the held-out test site is never used. We report the mean and standard deviation of performance across all held-out sites.

\textbf{Implementation Details.} We implement \method{} in PyTorch and run experiments on NVIDIA 5090 GPUs. A two-layer GCN~\cite{kipf2016gcn} with hidden size 64 is used as the backbone, trained with Adam (lr $=1\times10^{-3}$, weight decay $=5\times10^{-4}$). Transient connectivity is constructed via sliding windows ($W=30$, $S_w=5$). The gate sparsity target is set to $K=80$ for ABIDE, REST-meta-MDD, and SRPBS, and $K=150$ for ABCD, with $\gamma=5\times10^{-4}$. Scaffold thresholds are tuned via grid search, where $\tau_E$ is selected from the 70th–90th percentiles, and $\eta_E$ and $\zeta_E$ are chosen from $\{0.7, 0.75, 0.8, 0.85, 0.9\}$. We use the 80th percentile for $\tau_E$, with $\eta_E = 0.75$ and $\zeta_E = 0.70$ as default settings. We adopt a leave-one-site-out (LOSO) protocol~\cite{xu2025brainood}, selecting hyperparameters on training sites only. Results are reported as mean $\pm$ standard deviation across held-out sites.

\input{table/OOD}
\vspace{-0.2cm}
\subsection{Performance Comparison}

\begin{wrapfigure}{r}{0.55\textwidth}
    \vspace{-1.15cm}
    \centering
    \begin{subfigure}{0.48\linewidth}
        \centering
        \includegraphics[width=\linewidth]{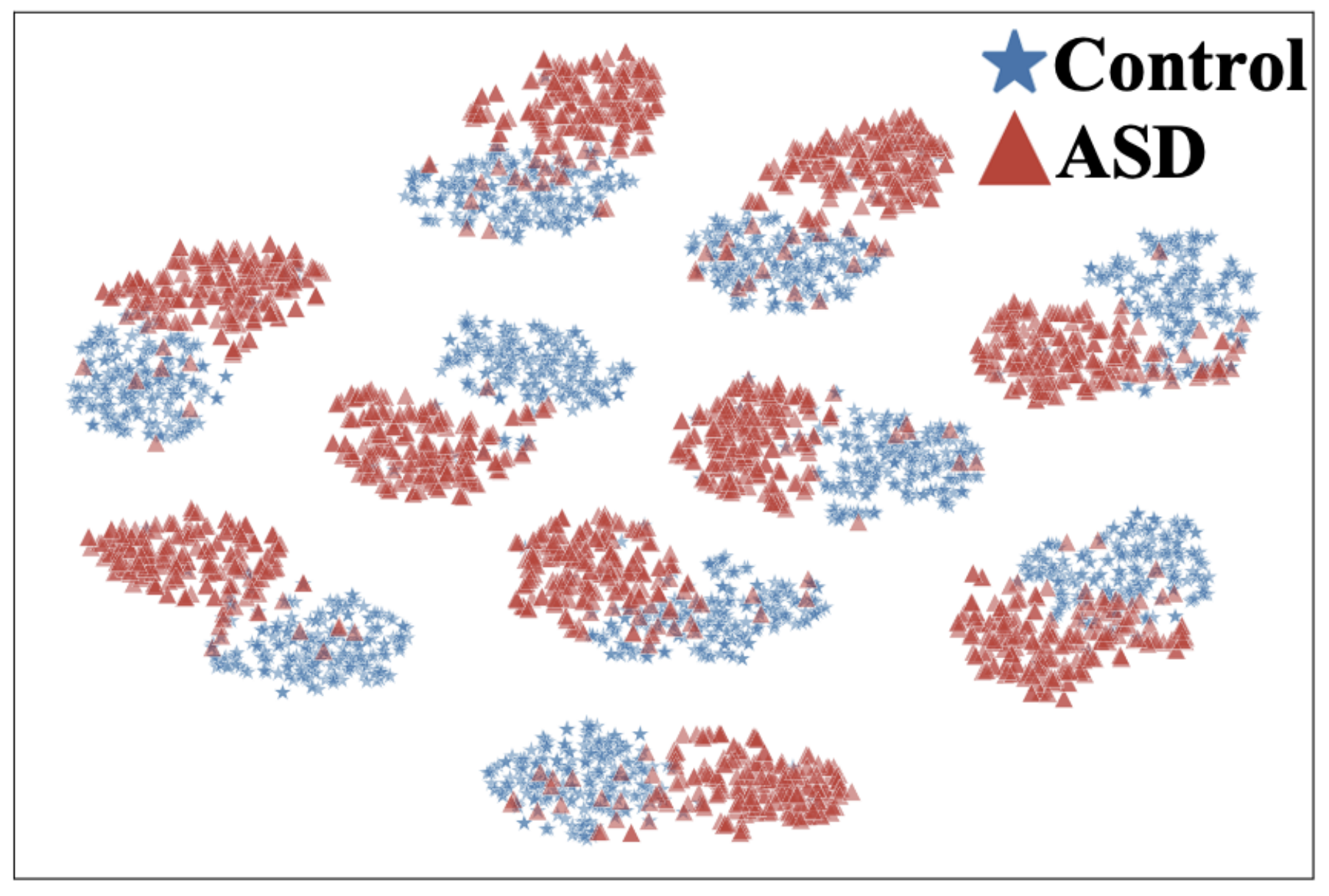}
        \caption{\method{}}
    \end{subfigure}
    \hfill
    \begin{subfigure}{0.48\linewidth}
        \centering
        \includegraphics[width=\linewidth]{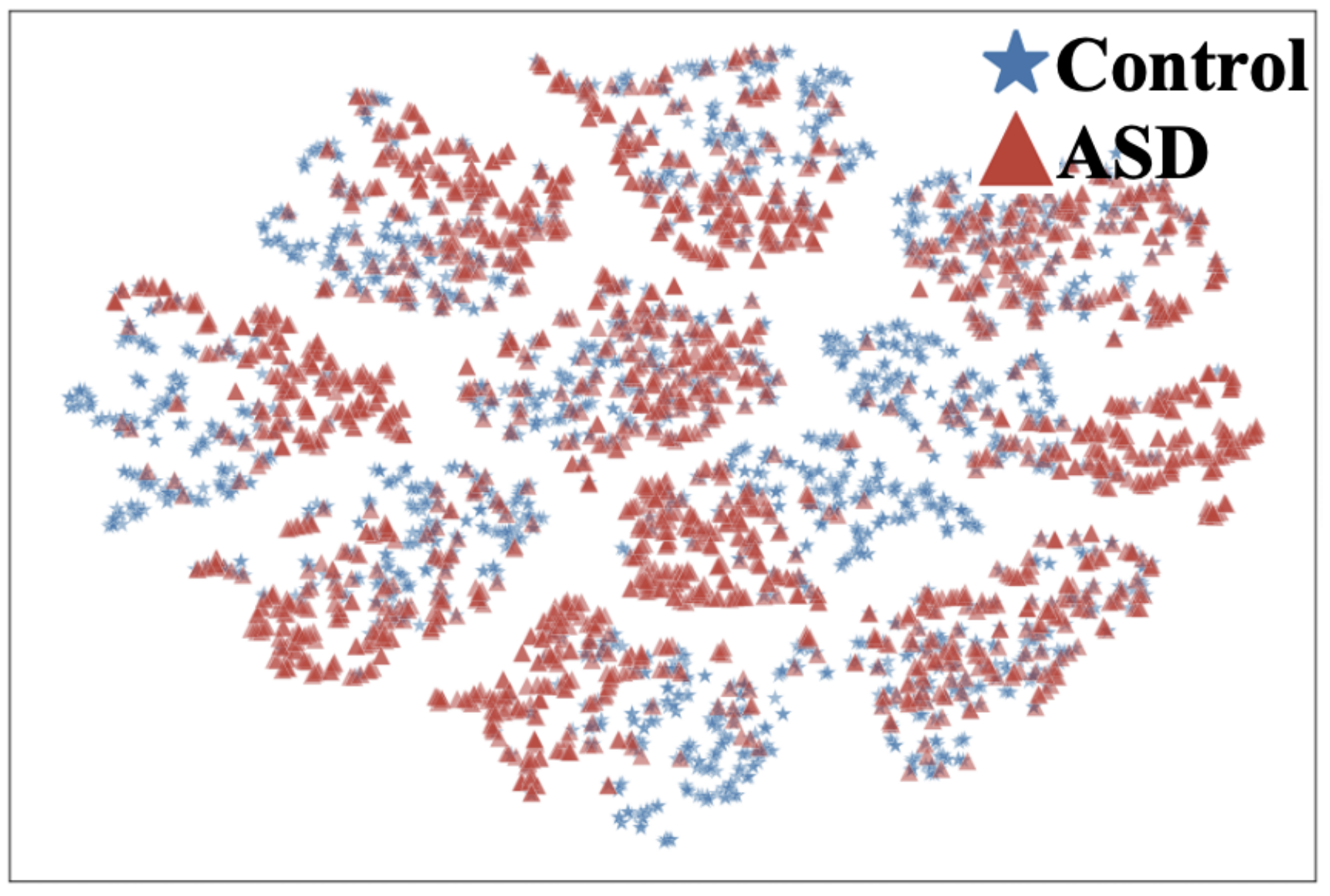}
        \caption{BrainOOD}
    \end{subfigure}
    \vspace{-0.1cm}
    \caption{t-SNE visualizations on ABIDE.}
    \label{fig:t-sne}
    \vspace{-0.5cm}
\end{wrapfigure}

We compare CORE with all baseline methods under the LOSO protocol on the ABIDE, REST-meta-MDD, SRPBS, and ABCD datasets in Table~\ref{tab:main_results_ood}. We observe that: (1) Brain network methods outperform generic GNNs and OOD baselines in most cases, indicating that cross-site neuroimaging generalization requires explicit modeling of FC patterns rather than directly applying general graph techniques. (2) General OOD and graph OOD methods do not consistently improve performance across datasets, suggesting that generic invariant learning objectives are inadequate when site shifts are coupled with structured confounders, disease-related connectivity variations, and temporal dynamics. (3) \method{} consistently outperforms all baselines across datasets and metrics. This is due to three factors: (i) site-aware confounder decoupling with cross-site scaffold extraction reduces reliance on site-conditioned shortcuts while preserving reproducible diagnostic connectivity patterns; (ii) transient pathway profiling summarizes scaffold-level dynamic FC into compact, fixed-dimensional descriptors, improving comparability across subjects and sites; and (iii) prior-guided, subject-adaptive gated message passing leverages population-level scaffold priors to emphasize subject-relevant pathways while suppressing irrelevant or weakly transferable signals. Additionally, we present t-SNE visualizations on the ABIDE dataset under cross-site OOD settings for a strong baseline, BrainOOD~\cite{xu2025brainood}, and \method{}, as shown in Fig.~\ref{fig:t-sne}. The embeddings produced by \method{} exhibit clearer separation between Control and ASD samples, indicating that it learns more discriminative representations under cross-site OOD settings. 

\vspace{-0.1cm}
To further assess whether the performance gains depend on a specific parcellation, we evaluate an additional ABIDE (CC200) setting by replacing the AAL atlas with the CC200 functional parcellation~\cite{cc200}. CC200 comprises 200 functionally defined ROIs, resulting in different graph sizes, node definitions, and edge topologies compared to the anatomically defined AAL atlas. As shown in Table~\ref{tab:main_results_ood}, \method{} still outperforms baseline methods under this alternative parcellation, indicating that its gains are not solely dependent on the AAL atlas.

\vspace{-0.2cm}
\subsection{Ablation Study}

To examine the contribution of each component in \method{}, we conduct ablation studies on four variants: (1) \method{} w/o CD, where site-aware deconfounding is replaced with global deconfounding; (2) \method{} w/o TP, where transient pathway profiling is replaced by static FC features; (3) \method{} w/o PG, where the scaffold-prior term is removed; and (4)
\method{} w/o SG, where subject-adaptive gating is disabled and message passing
is performed without subject-specific pathway selection. As shown in Fig.~\ref{fig:sensitivity}(a, b), we can find: (1) Removing CD leads to clear performance degradation, indicating that site-aware confounder decoupling reduces reliance on site-conditioned shortcuts. Removing TP also degrades performance, suggesting that scaffold-level transient descriptors provide complementary dynamic information beyond static FC. (2) Removing PG or SG results in additional drops, demonstrating that both scaffold-derived population priors and subject-adaptive pathway selection are important for effective gated message passing. More results are provided in Appendix~\ref{app:ablation}.

\subsection{Sensitivity Study}

We analyze the sensitivity of \method{} to three key hyperparameters: the gate budget $K$ and the temporal profiling parameters $(W, S_w)$. As shown in Fig.~\ref{fig:sensitivity}(c), performance initially improves and then declines as $K$ increases, with the best results achieved at $K=80$ on both ABIDE and REST-meta-MDD. This pattern reflects a trade-off in subject-adaptive pathway selection: a small budget may exclude informative diagnostic pathways, whereas a large budget may introduce redundant or weakly transferable scaffold nodes. We further fix $K=80$ and evaluate different combinations of window length $W$ and stride $S_w$ on ABIDE. As shown in Fig.~\ref{fig:sensitivity}(d), \method{} achieves the best performance at ($W=30$, $S_w=5$). Larger window lengths or strides generally degrade performance, suggesting that overly coarse temporal partitioning weakens the model’s ability to capture transient connectivity variations. These results indicate that moderate windowing strikes a favorable balance between temporal resolution and stable dynamic FC estimation. More results are provided in Appendix~\ref{app:sensitivity}.

\begin{figure}[t]
\centering
\hspace{-0.03\linewidth}
\begin{subfigure}[b]{0.23\linewidth}
    \centering
    \includegraphics[width=\linewidth]{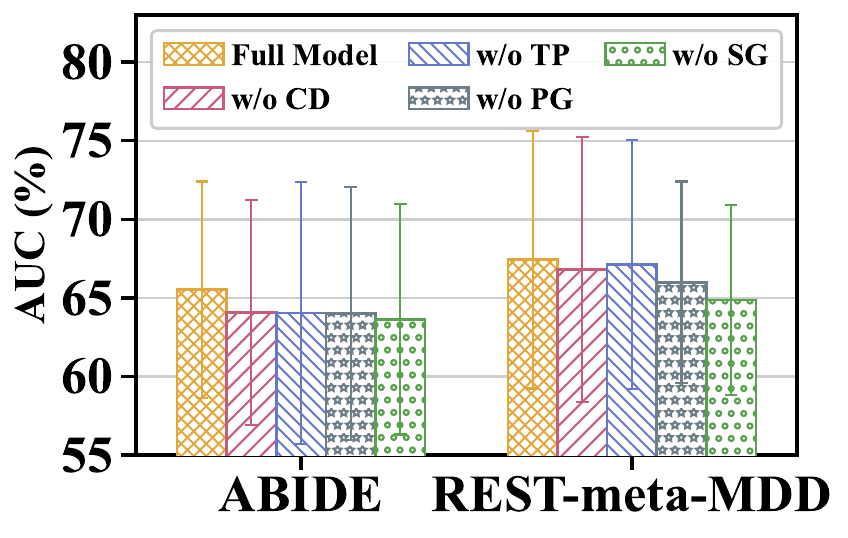}
    \caption{AUC}
\end{subfigure}
\hfill
\begin{subfigure}[b]{0.23\linewidth}
    \centering
    \includegraphics[width=\linewidth]{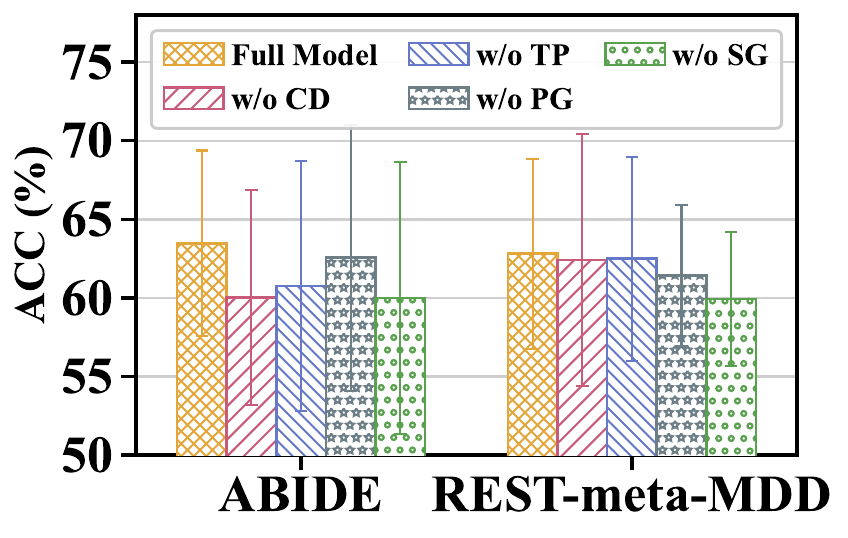}
    \caption{ACC}
\end{subfigure}
\hfill
\begin{subfigure}[b]{0.23\linewidth}
    \raisebox{0.3cm}\centering\includegraphics[height=0.623\linewidth]{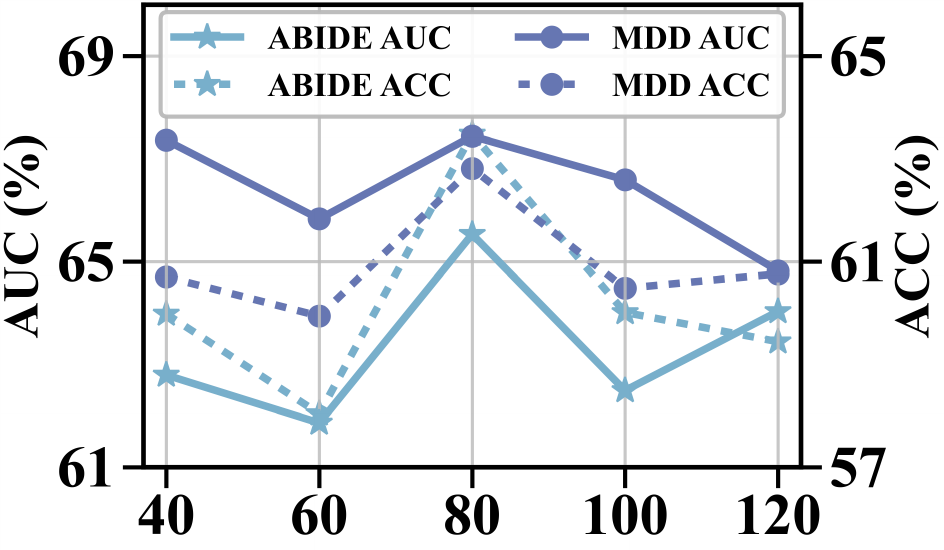}
    \caption{Gate Budget $K$}
\end{subfigure}
\hfill
\begin{subfigure}[b]{0.23\linewidth}
    \centering
    \includegraphics[width=\linewidth]{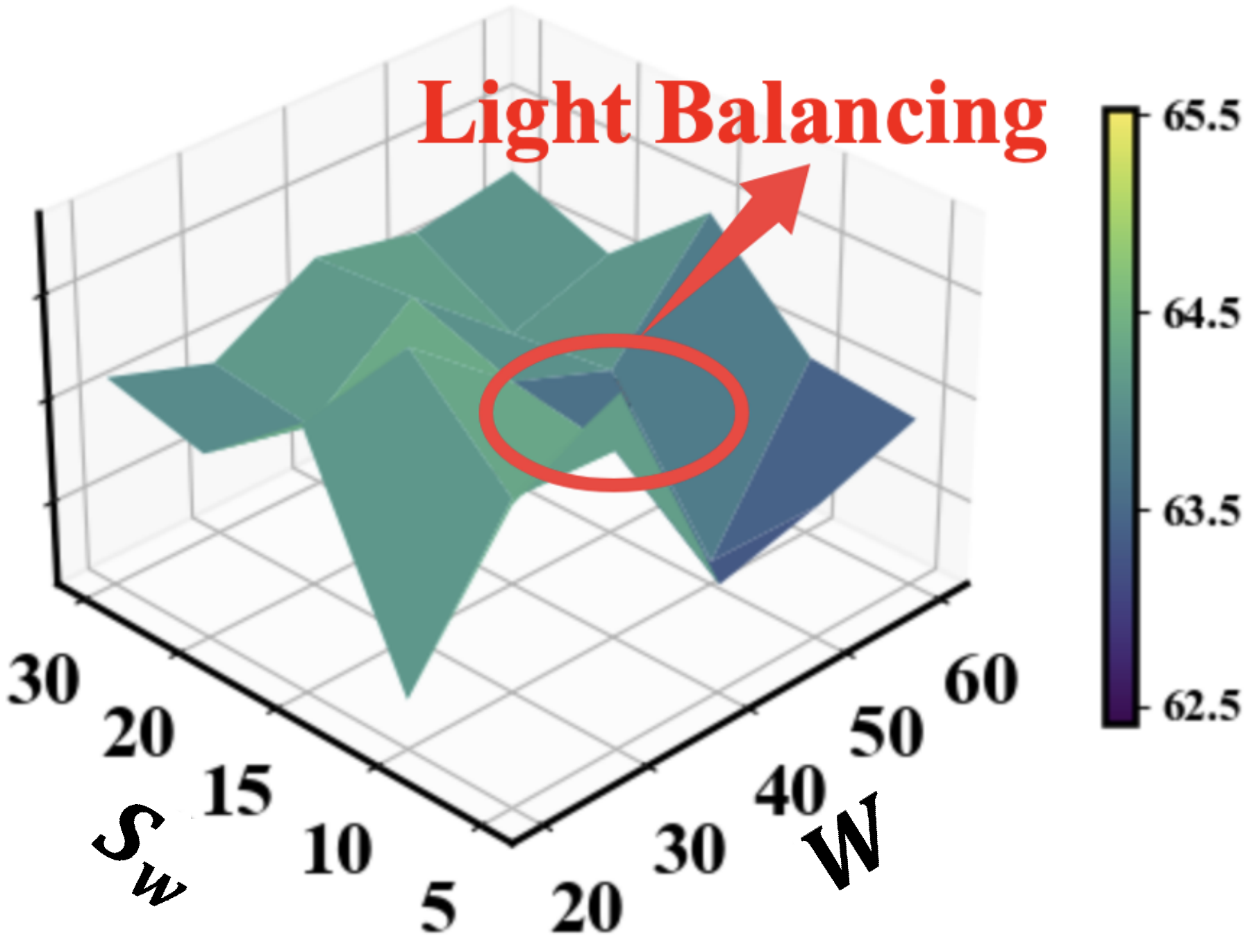}
    \caption{Windowing ($W$, $S_w$)}
\end{subfigure}
\vspace{-0.1cm}
\caption{Ablation results in (a, b), gate-budget sensitivity $K$ in (c), and windowing sensitivity $(W, S_w)$ in (d) on ABIDE and REST-meta-MDD.}
\vspace{-0.6cm}
\label{fig:sensitivity}
\end{figure}

\subsection{Interpretability Analysis}
\begin{wrapfigure}{r}{0.38\textwidth}
     \vspace{-1.4cm}
     \centering\includegraphics[width=\linewidth]{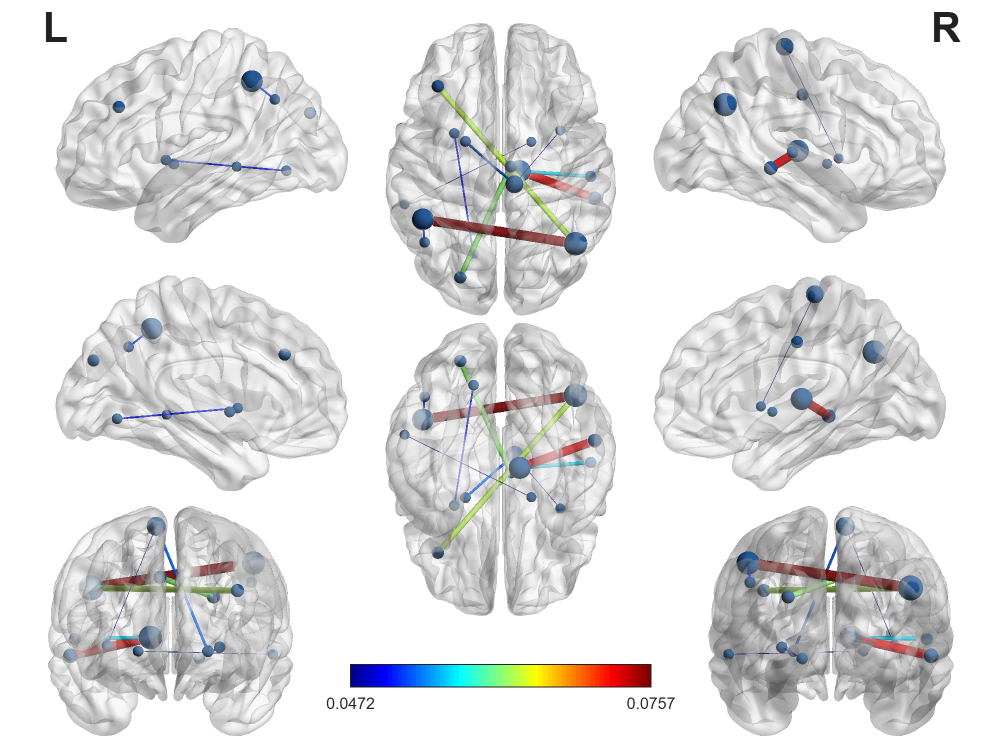}
     \vspace{-0.6cm}
     \caption{Top 10 positive brain connections identified on ABIDE.}
     \label{fig:edge}
     \vspace{-0.35cm}
\end{wrapfigure}

To assess the neurobiological relevance of the learned connectivity patterns, Fig.~\ref{fig:edge} visualizes the top ten positive scaffold connections identified by \method{} on ABIDE under cross-site OOD evaluation. As these connections are selected from the cross-site scaffold, they correspond to edges with reproducible diagnostic effects across training sites. The resulting pattern is sparse and spans cortico-subcortical circuits, including parietal association regions, thalamocortical pathways, cingulo-insular nodes, and basal ganglia structures. These regions and circuits are broadly consistent with prior rs-fMRI findings in ASD, which implicate default-mode/fronto-parietal, thalamocortical, salience, and striatal/basal-ganglia connectivity~\cite{hull2017resting,lee2016abnormalities,di2011aberrant}. Overall, these qualitative results suggest that \method{} captures neurobiologically interpretable and cross-site reproducible connectivity patterns. A more detailed analysis of these positive connections is provided in Appendix~\ref{sec:interpre}.

%% file: table/OOD.tex
\begin{table*}[t]
    \centering
    \caption{Results on ABIDE, ABIDE (CC200), REST-meta-MDD, SRPBS, and ABCD (ADHD-task) over LOSO sites. \textbf{Bold} results indicate the best performance.}
    \vspace{-0.15cm}\label{tab:main_results_ood}
    \resizebox{\textwidth}{!}{
    \begin{tabular}{cc |cc |cc |cc |cc |cc}
    \toprule
    \multirow{2}{*}{Type} & \multirow{2}{*}{Method}
    & \multicolumn{2}{c|}{ABIDE}
    & \multicolumn{2}{c|}{ABIDE (CC200)}
    & \multicolumn{2}{c|}{REST-meta-MDD}
    & \multicolumn{2}{c|}{SRPBS}
    & \multicolumn{2}{c}{ABCD (ADHD-task)} \\
                          &
    & AUC & ACC
    & AUC & ACC
    & AUC & ACC
    & AUC & ACC
    & AUC & ACC \\
    \midrule
    \multirow{3}{*}{\rotatebox{90}{\fontsize{9}{12}\selectfont GNN}}
    & GCN
    & 62.13$_{\pm 10.32}$ & 58.64$_{\pm 9.66}$
    & 50.43$_{\pm 8.86}$ & 50.41$_{\pm 8.55}$
    & 60.60$_{\pm 4.20}$ & 56.30$_{\pm 2.90}$
    & 73.80$_{\pm 19.50}$ & 70.20$_{\pm 19.20}$
    & 71.75$_{\pm 6.52}$ & 63.14$_{\pm 9.15}$ \\
    & GAT
    & 58.42$_{\pm 8.69}$ & 54.82$_{\pm 6.01}$
    & 57.90$_{\pm 9.37}$ & 56.71$_{\pm 7.74}$
    & 60.00$_{\pm 9.40}$ & 56.20$_{\pm 6.70}$
    & 75.20$_{\pm 15.60}$ & 69.40$_{\pm 14.40}$
    & 70.40$_{\pm 11.32}$ & 63.32$_{\pm 11.20}$ \\
    & GIN
    & 57.95$_{\pm 5.21}$ & 54.60$_{\pm 3.10}$
    & 52.61$_{\pm 8.83}$ & 49.86$_{\pm 10.23}$
    & 56.60$_{\pm 6.70}$ & 53.50$_{\pm 4.70}$
    & 72.30$_{\pm 21.70}$ & 68.70$_{\pm 18.10}$
    & 68.97$_{\pm 8.84}$ & 60.73$_{\pm 7.03}$ \\
    \midrule
    \multirow{2}{*}{\rotatebox{90}{\fontsize{9}{12}\selectfont OOD}}
    & IRM
    & 58.77$_{\pm 6.12}$ & 55.50$_{\pm 3.85}$
    & 51.98$_{\pm 10.64}$ & 52.86$_{\pm 6.99}$
    & 58.60$_{\pm 4.40}$ & 54.20$_{\pm 3.40}$
    & 69.50$_{\pm 21.90}$ & 67.80$_{\pm 16.80}$
    & 70.40$_{\pm 6.60}$ & 59.06$_{\pm 7.57}$ \\
    & CORAL
    & 55.69$_{\pm 6.12}$ & 52.75$_{\pm 5.49}$
    & 53.96$_{\pm 10.69}$ & 52.08$_{\pm 7.42}$
    & 60.40$_{\pm 9.00}$ & 57.20$_{\pm 7.80}$
    & 72.40$_{\pm 15.50}$ & 68.40$_{\pm 12.70}$
    & 69.48$_{\pm 7.70}$ & 62.55$_{\pm 10.46}$ \\
    \midrule
    \multirow{4}{*}{
\rotatebox{90}{
\shortstack{\fontsize{9}{12}\selectfont Graph\\OOD}
}
}
    & GSAT
    & 58.69$_{\pm 7.62}$ & 56.12$_{\pm 7.98}$
    & 51.02$_{\pm 7.05}$ & 51.07$_{\pm 6.27}$
    & 57.40$_{\pm 5.80}$ & 54.80$_{\pm 4.50}$
    & 71.30$_{\pm 19.00}$ & 67.70$_{\pm 15.80}$
    & 71.27$_{\pm 8.73}$ & 61.16$_{\pm 7.48}$ \\
    & DisC
    & 56.67$_{\pm 7.85}$ & 56.14$_{\pm 8.68}$
    & 53.06$_{\pm 6.48}$ & 52.77$_{\pm 6.41}$
    & 56.20$_{\pm 4.40}$ & 53.30$_{\pm 3.90}$
    & 73.70$_{\pm 19.80}$ & 70.00$_{\pm 16.90}$
    & 69.59$_{\pm 7.70}$ & 58.83$_{\pm 9.45}$ \\
    & CEPG
    & 51.04$_{\pm 8.99}$ & 50.03$_{\pm 6.12}$
    & 45.03$_{\pm 9.74}$ & 47.47$_{\pm 8.93}$
    & 53.98$_{\pm 5.29}$ & 52.96$_{\pm 4.46}$
    & 63.80$_{\pm 8.70}$ & 59.10$_{\pm 6.27}$
    & 63.11$_{\pm 11.06}$ & 57.07$_{\pm 9.59}$ \\
    & DiSCO
    & 56.78$_{\pm 6.72}$ & 55.47$_{\pm 4.83}$
    & 51.58$_{\pm 8.45}$ & 51.37$_{\pm 6.21}$
    & 57.95$_{\pm 3.73}$ & 54.08$_{\pm 2.84}$
    & 74.20$_{\pm 18.18}$ & 67.64$_{\pm 15.46}$
    & 67.51$_{\pm 9.83}$ & 61.98$_{\pm 7.44}$ \\
    \midrule
    \multirow{6}{*}{
\rotatebox{90}{
\shortstack{\fontsize{9}{12}\selectfont Brain\\Networks}
}
}
    & BrainNetTF
    & 62.85$_{\pm 6.20}$ & 60.90$_{\pm 7.06}$
    & 57.04$_{\pm 11.21}$ & 52.55$_{\pm 8.08}$
    & 61.50$_{\pm 7.40}$ & 58.20$_{\pm 5.60}$
    & 61.56$_{\pm 17.03}$ & 60.41$_{\pm 10.80}$
    & 68.28$_{\pm 4.54}$ & 56.91$_{\pm 10.43}$ \\
    & AGMGC
    & 58.68$_{\pm 8.68}$ & 51.49$_{\pm 5.42}$
    & 55.67$_{\pm 7.72}$ & 50.85$_{\pm 5.23}$
    & 59.06$_{\pm 5.07}$ & 54.74$_{\pm 4.12}$
    & 83.52$_{\pm 15.32}$ & 75.86$_{\pm 16.87}$
    & 70.86$_{\pm 8.67}$ & 63.02$_{\pm 11.29}$ \\
    & FC-HGNN
    & 55.26$_{\pm 7.90}$ & 52.23$_{\pm 4.58}$
    & 55.04$_{\pm 9.71}$ & 53.21$_{\pm 5.45}$
    & 58.69$_{\pm 6.69}$ & 56.78$_{\pm 5.71}$
    & 57.30$_{\pm 10.60}$ & 53.80$_{\pm 6.40}$
    & 71.08$_{\pm 5.74}$ & 59.10$_{\pm 12.34}$ \\
    & XG-GNN
    & 60.41$_{\pm 7.73}$ & 54.12$_{\pm 4.09}$
    & 50.64$_{\pm 12.57}$ & 52.71$_{\pm 8.78}$
    & 60.08$_{\pm 7.23}$ & 56.24$_{\pm 6.49}$
    & 81.80$_{\pm 15.21}$ & 74.69$_{\pm 17.29}$
    & 71.20$_{\pm 5.46}$ & 55.66$_{\pm 9.78}$ \\
    & DeCI
    & 56.02$_{\pm 8.91}$ & 60.78$_{\pm 2.89}$
    & 58.55$_{\pm 3.92}$ & 61.93$_{\pm 3.72}$
    & 54.38$_{\pm 2.95}$ & 56.18$_{\pm 2.89}$
    & 90.57$_{\pm 12.30}$ & 85.21$_{\pm 13.68}$
    & 55.03$_{\pm 6.79}$ & 62.81$_{\pm 5.46}$ \\
    & BrainOOD
    & 63.82$_{\pm 6.73}$ & 59.66$_{\pm 9.70}$
    & 62.51$_{\pm 4.61}$ & 58.90$_{\pm 4.09}$
    & 63.22$_{\pm 6.29}$ & 58.45$_{\pm 5.20}$
    & 90.00$_{\pm 7.33}$ & 77.70$_{\pm 15.23}$
    & 72.77$_{\pm 8.07}$ & 64.36$_{\pm 7.24}$ \\
    \midrule
    & \method{}
    & \textbf{65.53$_{\pm 6.88}$} & \textbf{63.48$_{\pm 5.90}$}
    & \textbf{63.74$_{\pm 8.26}$} & \textbf{62.77$_{\pm 8.83}$}
    & \textbf{67.44$_{\pm 8.20}$} & \textbf{62.81$_{\pm 6.04}$}
    & \textbf{91.13$_{\pm 8.27}$} & \textbf{86.17$_{\pm 9.86}$}
    & \textbf{74.31$_{\pm 8.34}$} & \textbf{67.26$_{\pm 7.53}$} \\

    \bottomrule
    \end{tabular}
    }
    \vspace{-0.6cm}
\end{table*}

%% file: main/6_conclusion.tex
\vspace{-0.2cm}
\section{Conclusion}
\vspace{-0.2cm}
% In this paper, we study cross-site OOD generalization for brain network and propose \method{}, a unified framework that addresses site-conditioned confounding, temporal dynamics, and population–subject variability. Experiments on multiple datasets show that \method{} outperforms state-of-the-art methods under cross-site OOD settings and achieves robust performance. A limitation of \method{} is its reliance on available confounder annotations and sufficient cross-site diversity. Future work will explore metadata-robust deconfounding and adaptation to missing-confounder settings.

In this paper, we study cross-site OOD generalization for brain network analysis and propose \method{}, a unified framework that addresses site-conditioned confounding, temporal dynamics, and population–subject variability. Experiments on multiple datasets show that \method{} outperforms state-of-the-art methods under cross-site OOD settings and achieves robust performance. In future work, we will focus on metadata-robust deconfounding and adaptation to missing-confounder settings.

%% file: main/7_appendix.tex
\appendix
\clearpage

\section{Algorithm}
\label{algorithm}
\vspace{-0.3cm}

\begin{algorithm}[h]
\caption{Framework of Cross-site OOD Robust brain nEtwork (\method{})}
\label{alg:overall}
\begin{algorithmic}[1]
\REQUIRE Training environments $\mathcal E_{\mathrm{tr}}$ with datasets 
$\{\widehat{\mathcal D}^{e}\}_{e\in\mathcal E_{\mathrm{tr}}}$, 
unseen test dataset $\widehat{\mathcal D}^{e_{\mathrm{test}}}$, 
temporal parameters $(W,S_w)$, thresholds $\tau_E,\eta_E,\zeta_E$, and gate budget $K$.
\ENSURE Test predictions $\widehat{\mathbf Y}_{\mathrm{test}}$.

\STATE Define $\widehat{\mathcal D}_{\mathrm{tr}}
=\bigcup_{e\in\mathcal E_{\mathrm{tr}}}\widehat{\mathcal D}^{e}$.

\STATE \textbf{Stage 1: Site-Aware Confounder Decoupling \& Scaffold Extraction}
\FOR{each $e\in\mathcal E_{\mathrm{tr}}$}
    \STATE Estimate $(\widehat{\mathbf b}_e,\widehat{\boldsymbol\Gamma}_e)$ 
    by Huber regression (Eq.~\eqref{eq:7}).
    \STATE Obtain $\mathbf r_i^{\mathrm{site}}$ for subjects in 
    $\widehat{\mathcal D}^{e}$ (Eq.~\eqref{eq:site_res}).
    \STATE Compute the site-level contrast $\mathbf d_e$.
\ENDFOR
\STATE Compute $\mathbf d_{\mathrm{com}}$, $\kappa_j$, and $\pi_j$ 
(Eq.~\eqref{eq:11}).
\STATE Extract the population scaffold $S$ (Eq.~\eqref{eq:12}).
\STATE Form the frozen site-agnostic deconfounder 
$(\bar{\mathbf b},\bar{\boldsymbol\Gamma})$.

\STATE \textbf{Stage 2: Transient Pathway Profiling}
\STATE Enumerate $S$ as $\{j_p\}_{p=1}^{M_S}$.
\FOR{each training subject $i\in\widehat{\mathcal D}_{\mathrm{tr}}$}
    \STATE Compute windowed correlations $\rho^{(t)}_{i,j}$ and Fisher-z values $c^{(t)}_{i,j}$.
    \STATE Compute temporal descriptors $s_{i,j}$ and $f_{i,j}$ (Eq.~\eqref{eq:descriptor}).
    \STATE Initialize $\mathbf h^{(0)}_{i,p}$ using $\mathbf r_i^{\mathrm{site}}$ 
    (Eq.~\eqref{eq:node_feature}).
\ENDFOR
\STATE Compute prior scores $\mu_{0,p}$, line-graph adjacency $\mathbf A$, and propagation matrix $\widetilde{\mathbf A}$.

\STATE \textbf{Stage 3: Prior-Guided Subject-Adaptive Gating and Training}
\STATE Initialize model parameters.
\WHILE{not converged}
    \STATE Sample a batch $\mathcal B\subset\widehat{\mathcal D}_{\mathrm{tr}}$.
    \FOR{each subject $i\in\mathcal B$}
        \STATE Compute $\mathbf c_i$, $g_{i,p}$, and $\mathbf h^{(L)}_{i,p}$ 
        (Eqs.~\eqref{eq:16}--\eqref{eq:graph_update}).
        \STATE Compute $\mathbf z_i$ and predict $\widehat y_i$ 
        (Eq.~\eqref{eq:graph_level}).
    \ENDFOR
    \STATE Minimize $\mathcal L=\mathcal L_{\mathrm{cls}}
    +\gamma\mathcal L_{\mathrm{sparse}}$ (Eq.~\eqref{eq:final}).
\ENDWHILE

\STATE \textbf{Stage 4: Inference on Unseen Sites}
\FOR{each test subject $i\in\widehat{\mathcal D}^{e_{\mathrm{test}}}$}
    \STATE Obtain $\mathbf r_i^{\mathrm{res}}$ using 
    $(\bar{\mathbf b},\bar{\boldsymbol\Gamma})$ (Eq.~\eqref{eq:9}).
    \STATE Compute temporal descriptors and initialize $\mathbf h^{(0)}_{i,p}$ on $S$ using $\mathbf r_i^{\mathrm{res}}$.
    \STATE Compute $g_{i,p}$, $\mathbf h^{(L)}_{i,p}$, $\mathbf z_i$, and $\widehat y_i$.
\ENDFOR
\RETURN $\widehat{\mathbf Y}_{\mathrm{test}}
=\{\widehat y_i\}_{i=1}^{N_{\mathrm{test}}}$.
\end{algorithmic}
\end{algorithm}

\section{Proof of Proposition~\ref{pro1}}

\textbf{Proposition~\ref{pro1}.} \textit{
Assume the true site-specific generative process of FC is
\begin{equation}
    \bm{r}_i
    =
    \bm{b}_e^{\star}
    +
    \bm{\Gamma}_e^{\star\top}\tilde{\bm{q}}_i
    +
    \bm{s}_i^{\star}
    +
    \bm{\epsilon}_i,
\end{equation}
where $\bm{b}_e^{\star} \in \mathbb{R}^{M}$ and $\bm{\Gamma}_e^{\star} \in \mathbb{R}^{d \times M}$ denote the true site-specific edge-wise intercept vector and confounder coefficient matrix, $\bm{s}_i^{\star} \in \mathbb{R}^{M}$ denotes the non-confounding FC component that may contain condition-related signal, and
$
\mathbb{E}[\bm{\epsilon}_i \mid e,\tilde{\bm{q}}_i,\bm{s}_i^{\star}]
=
\bm{0}.
$
Then, for any fixed global residualization coefficients $(\bm{b}_{\mathrm{glob}},\bm{\Gamma}_{\mathrm{glob}})$, the globally residualized representation satisfies
\begin{equation}
    \mathbb{E}\!\left[
    \hat{\bm{r}}_i^{\mathrm{glob}}
    \mid e, \tilde{\bm{q}}_i, \bm{s}_i^{\star}
    \right]
    =
    \bm{s}_i^{\star}
    +
    (\bm{b}_e^{\star} - \bm{b}_{\mathrm{glob}})
    +
    (\bm{\Gamma}_e^{\star} - \bm{\Gamma}_{\mathrm{glob}})^{\top}
    \tilde{\bm{q}}_i.
\end{equation}
}

\textbf{Proof.}

Let $\bm{b}_{\mathrm{glob}}$ and $\bm{\Gamma}_{\mathrm{glob}}$ be any fixed
global residualization coefficients. By the definition of pooled residualization,
\begin{equation}
    \hat{\bm{r}}_i^{\mathrm{glob}}
    =
    \bm{r}_i
    -
    \bm{b}_{\mathrm{glob}}
    -
    \bm{\Gamma}_{\mathrm{glob}}^{\top}\tilde{\bm{q}}_i .
\end{equation}

Substituting the site-specific generative model
\begin{equation}
    \bm{r}_i
    =
    \bm{b}_e^{\star}
    +
    \bm{\Gamma}_e^{\star\top}\tilde{\bm{q}}_i
    +
    \bm{s}_i^{\star}
    +
    \bm{\epsilon}_i
\end{equation}
into the above expression gives
\begin{align}
    \hat{\bm{r}}_i^{\mathrm{glob}}
    &=
    \left(
        \bm{b}_e^{\star}
        +
        \bm{\Gamma}_e^{\star\top}\tilde{\bm{q}}_i
        +
        \bm{s}_i^{\star}
        +
        \bm{\epsilon}_i
    \right)
    -
    \bm{b}_{\mathrm{glob}}
    -
    \bm{\Gamma}_{\mathrm{glob}}^{\top}\tilde{\bm{q}}_i  \\
    &=
    \bm{s}_i^{\star}
    +
    \left(\bm{b}_e^{\star}-\bm{b}_{\mathrm{glob}}\right)
    +
    \left(\bm{\Gamma}_e^{\star}-\bm{\Gamma}_{\mathrm{glob}}\right)^{\top}
    \tilde{\bm{q}}_i
    +
    \bm{\epsilon}_i .
\end{align}

Taking conditional expectation with respect to
$(e,\tilde{\bm{q}}_i,\bm{s}_i^{\star})$ and using linearity of expectation, we
obtain
\begin{align}
    \mathbb{E}\!\left[
        \hat{\bm{r}}_i^{\mathrm{glob}}
        \mid e,\tilde{\bm{q}}_i,\bm{s}_i^{\star}
    \right]
    &=
    \bm{s}_i^{\star}
    +
    \left(\bm{b}_e^{\star}-\bm{b}_{\mathrm{glob}}\right)
    +
    \left(\bm{\Gamma}_e^{\star}-\bm{\Gamma}_{\mathrm{glob}}\right)^{\top}
    \tilde{\bm{q}}_i  \\
    &\quad+
    \mathbb{E}\!\left[
        \bm{\epsilon}_i
        \mid e,\tilde{\bm{q}}_i,\bm{s}_i^{\star}
    \right].
\end{align}

By the assumption
\begin{equation}
    \mathbb{E}\!\left[
        \bm{\epsilon}_i
        \mid e,\tilde{\bm{q}}_i,\bm{s}_i^{\star}
    \right]
    =
    \bm{0},
\end{equation}
it follows that
\begin{equation}
    \mathbb{E}\!\left[
        \hat{\bm{r}}_i^{\mathrm{glob}}
        \mid e,\tilde{\bm{q}}_i,\bm{s}_i^{\star}
    \right]
    =
    \bm{s}_i^{\star}
    +
    \left(\bm{b}_e^{\star}-\bm{b}_{\mathrm{glob}}\right)
    +
    \left(\bm{\Gamma}_e^{\star}-\bm{\Gamma}_{\mathrm{glob}}\right)^{\top}
    \tilde{\bm{q}}_i .
\end{equation}

\section{Proof of Proposition~\ref{prop:shortcut_suppression}}

\textbf{Proposition~\ref{prop:shortcut_suppression} (Margin Stability of Site-aware Scaffold)}.
\textit{
For any edge $j\in\mathcal I$, let
$r^\star_{i,j}=r_{i,j}-b^\star_{e,j}-\tilde{\bm q}_i^\top\bm\gamma^\star_{e,j}$
be the oracle site-wise residual, and let
$d^\star_{e,j}$, $d^\star_{\mathrm{com},j}$,
$d^{\star(b)}_{\mathrm{com},j}$, $\kappa^\star_j$, and $\pi^\star_j$
be the corresponding oracle scaffold statistics. Define the nuisance-estimation perturbation
\begin{equation}
\Delta_j
=
\max_{e\in\mathcal E_{\mathrm{tr}}}
\sum_{y\in\{0,1\}}
\sup_{i\in\widehat{\mathcal D}^{(y)}_e}
\left|
(\hat b_{e,j}-b^\star_{e,j})
+
\tilde{\bm q}_i^\top
(\hat{\bm\gamma}_{e,j}-\bm\gamma^\star_{e,j})
\right|,
\end{equation}
and the oracle margin
\begin{equation}
m^\star_j
=
\min
\left\{
|d^\star_{\mathrm{com},j}|,
\left||d^\star_{\mathrm{com},j}|-\tau_E\right|,
\min_{e\in\mathcal E_{\mathrm{tr}}}|d^\star_{e,j}|,
\min_{1\le b\le B}|d^{\star(b)}_{\mathrm{com},j}|
\right\}.
\end{equation}
Let
\begin{equation}
\mathcal S^\star
=
\left\{
j\in\mathcal I:
|d^\star_{\mathrm{com},j}|>\tau_E,\;
\kappa^\star_j\ge\eta_E,\;
\pi^\star_j\ge\zeta_E
\right\}.
\end{equation}
If $\Delta_j<m^\star_j$, then
\begin{equation}
\mathbf 1\{j\in\mathcal S\}
=
\mathbf 1\{j\in\mathcal S^\star\}.
\end{equation}
}

\textbf{Proof.}

Fix an edge $j\in\mathcal I$. We condition on the fitted site-wise
deconfounder parameters and on the bootstrap resampling weights. Under this
conditioning, all quantities below are deterministic, so the proof does not
require any distributional or independence assumption.

Recall that the estimated site-aware residual and the oracle site-aware residual
are
\begin{equation}
r^{\mathrm{site}}_{i,j}
=
r_{i,j}
-
\hat b_{e,j}
-
\tilde{\bm q}_i^\top\hat{\bm\gamma}_{e,j},
\qquad
r^\star_{i,j}
=
r_{i,j}
-
b^\star_{e,j}
-
\tilde{\bm q}_i^\top\bm\gamma^\star_{e,j}.
\end{equation}
Their difference is
\begin{align}
r^{\mathrm{site}}_{i,j}-r^\star_{i,j}
&=
\left(
b^\star_{e,j}-\hat b_{e,j}
\right)
+
\tilde{\bm q}_i^\top
\left(
\bm\gamma^\star_{e,j}
-
\hat{\bm\gamma}_{e,j}
\right)                                                        \\
&=
-
\left[
(\hat b_{e,j}-b^\star_{e,j})
+
\tilde{\bm q}_i^\top
(\hat{\bm\gamma}_{e,j}-\bm\gamma^\star_{e,j})
\right].
\end{align}
Hence, for each class $y\in\{0,1\}$ and site
$e\in\mathcal E_{\mathrm{tr}}$, define
\begin{equation}
\alpha^{(y)}_{e,j}
=
\sup_{i\in\widehat{\mathcal D}^{(y)}_e}
\left|
(\hat b_{e,j}-b^\star_{e,j})
+
\tilde{\bm q}_i^\top
(\hat{\bm\gamma}_{e,j}-\bm\gamma^\star_{e,j})
\right|.
\end{equation}
Then, for all $i\in\widehat{\mathcal D}^{(y)}_e$,
\begin{equation}
\left|
r^{\mathrm{site}}_{i,j}-r^\star_{i,j}
\right|
\le
\alpha^{(y)}_{e,j}.
\end{equation}
Let
\begin{equation}
\Delta_{e,j}
=
\alpha^{(1)}_{e,j}
+
\alpha^{(0)}_{e,j},
\qquad
\Delta_j
=
\max_{e\in\mathcal E_{\mathrm{tr}}}
\Delta_{e,j}.
\end{equation}

We now use the deterministic stability of one-dimensional Huber M-location
estimators. For a finite sample $x_A=\{x_i:i\in A\}$, define the Huber
location functional
\begin{equation}
T_\delta(x_A)
\in
\arg\min_{\mu\in\mathbb R}
\sum_{i\in A}
H_\delta(x_i-\mu),
\end{equation}
where $H_\delta$ is the Huber loss. 

Let
\begin{equation}
\psi_\delta(t)
=
H'_\delta(t)
=
\begin{cases}
t, & |t|\le \delta,\\
\delta\,\operatorname{sgn}(t), & |t|>\delta,
\end{cases}
\end{equation}
be the Huber score. The objective is convex in $\mu$, and its first-order
condition can be written as
\begin{equation}
\Psi_x(\mu)
=
\sum_{i\in A}
\psi_\delta(x_i-\mu)
=
0,
\end{equation}
with the usual subgradient interpretation at non-smooth points. Since
$\psi_\delta(\cdot)$ is non-decreasing, $\Psi_x(\mu)$ is non-increasing in
$\mu$. The functional $T_\delta$ is translation-equivariant because, for any
constant $c$,
\begin{align}
T_\delta(x_A+c\mathbf 1)
&\in
\arg\min_{\mu}
\sum_{i\in A}
H_\delta(x_i+c-\mu)                                           \\
&=
c+
\arg\min_{\nu}
\sum_{i\in A}
H_\delta(x_i-\nu)
=
T_\delta(x_A)+c.
\end{align}
It is also monotone: if $x_i\le z_i$ for all $i\in A$, then
\begin{equation}
\psi_\delta(x_i-\mu)
\le
\psi_\delta(z_i-\mu),
\qquad \forall i,\mu,
\end{equation}
and therefore
\begin{equation}
\Psi_x(\mu)
\le
\Psi_z(\mu),
\qquad \forall \mu.
\end{equation}
Because both score functions are non-increasing in $\mu$, the zero of
$\Psi_z$ cannot lie to the left of the zero of $\Psi_x$. Thus,
\begin{equation}
T_\delta(x_A)
\le
T_\delta(z_A).
\end{equation}
Consequently, if two samples satisfy
\begin{equation}
\max_{i\in A}|x_i-x^\star_i|\le a,
\end{equation}
then
\begin{equation}
x^\star_i-a
\le
x_i
\le
x^\star_i+a,
\qquad \forall i\in A.
\end{equation}
By monotonicity and translation equivariance,
\begin{equation}
T_\delta(x^\star_A)-a
=
T_\delta(x^\star_A-a\mathbf 1)
\le
T_\delta(x_A)
\le
T_\delta(x^\star_A+a\mathbf 1)
=
T_\delta(x^\star_A)+a,
\end{equation}
which gives the non-expansiveness bound
\begin{equation}
\left|
T_\delta(x_A)-T_\delta(x^\star_A)
\right|
\le a.
\end{equation}

Let $\mu^{(y)}_{e,j}$ be the Huber robust mean of
$\{r^{\mathrm{site}}_{i,j}:i\in\widehat{\mathcal D}^{(y)}_e\}$, and let
$\mu^{\star(y)}_{e,j}$ be the Huber robust mean of
$\{r^\star_{i,j}:i\in\widehat{\mathcal D}^{(y)}_e\}$. Applying the above
non-expansiveness result with
$a=\alpha^{(y)}_{e,j}$ yields
\begin{equation}
\left|
\mu^{(y)}_{e,j}
-
\mu^{\star(y)}_{e,j}
\right|
\le
\alpha^{(y)}_{e,j}.
\end{equation}
Since the site-level diagnostic contrasts are
\begin{equation}
d_{e,j}
=
\mu^{(1)}_{e,j}
-
\mu^{(0)}_{e,j},
\qquad
d^\star_{e,j}
=
\mu^{\star(1)}_{e,j}
-
\mu^{\star(0)}_{e,j},
\end{equation}
we have
\begin{align}
\left|
d_{e,j}-d^\star_{e,j}
\right|
&=
\left|
\left(
\mu^{(1)}_{e,j}
-
\mu^{(0)}_{e,j}
\right)
-
\left(
\mu^{\star(1)}_{e,j}
-
\mu^{\star(0)}_{e,j}
\right)
\right|                                                        \\
&\le
\left|
\mu^{(1)}_{e,j}
-
\mu^{\star(1)}_{e,j}
\right|
+
\left|
\mu^{(0)}_{e,j}
-
\mu^{\star(0)}_{e,j}
\right|                                                        \\
&\le
\alpha^{(1)}_{e,j}
+
\alpha^{(0)}_{e,j}
=
\Delta_{e,j}
\le
\Delta_j.
\end{align}
Therefore,
\begin{equation}
\max_{e\in\mathcal E_{\mathrm{tr}}}
\left|
d_{e,j}-d^\star_{e,j}
\right|
\le
\Delta_j.
\end{equation}

For fixed non-negative weights
$\{w_e\}_{e\in\mathcal E_{\mathrm{tr}}}$, define the weighted median as
\begin{equation}
\operatorname{Med}_w(\{x_e\})
\in
\arg\min_{u\in\mathbb R}
\sum_{e\in\mathcal E_{\mathrm{tr}}}
w_e |x_e-u|.
\end{equation}
The weighted median is translation-equivariant:
\begin{align}
\operatorname{Med}_w(\{x_e+c\})
&\in
\arg\min_u
\sum_e w_e |x_e+c-u|                                           \\
&=
c+
\arg\min_v
\sum_e w_e |x_e-v|
=
\operatorname{Med}_w(\{x_e\})+c.
\end{align}
It is also monotone as an order statistic: if $x_e\le z_e$ for all $e$, then the
weighted cumulative mass of $\{z_e\}$ is shifted to the right relative to
$\{x_e\}$, so the canonical weighted median cannot move left:
\begin{equation}
\operatorname{Med}_w(\{x_e\})
\le
\operatorname{Med}_w(\{z_e\}).
\end{equation}
Thus, if
\begin{equation}
\max_e |x_e-x^\star_e|\le a,
\end{equation}
then
\begin{equation}
x^\star_e-a
\le
x_e
\le
x^\star_e+a,
\qquad \forall e,
\end{equation}
and translation equivariance plus monotonicity gives
\begin{equation}
\operatorname{Med}_w(\{x^\star_e\})-a
\le
\operatorname{Med}_w(\{x_e\})
\le
\operatorname{Med}_w(\{x^\star_e\})+a.
\end{equation}
Therefore,
\begin{equation}
\left|
\operatorname{Med}_w(\{x_e\})
-
\operatorname{Med}_w(\{x^\star_e\})
\right|
\le a.
\end{equation}

The consensus contrast $d_{\mathrm{com},j}$ is the unweighted median of
$\{d_{e,j}\}_{e\in\mathcal E_{\mathrm{tr}}}$, while
$d^\star_{\mathrm{com},j}$ is the unweighted median of
$\{d^\star_{e,j}\}_{e\in\mathcal E_{\mathrm{tr}}}$. Taking $w_e=1$ and using
\begin{equation}
\max_{e\in\mathcal E_{\mathrm{tr}}}
|d_{e,j}-d^\star_{e,j}|
\le
\Delta_j,
\end{equation}
we obtain
\begin{equation}
\left|
d_{\mathrm{com},j}
-
d^\star_{\mathrm{com},j}
\right|
\le
\Delta_j.
\end{equation}

For the $b$-th bootstrap draw, let $w^{(b)}_e$ be the number of times site
$e$ is sampled. Because the bootstrap consensus contrast is obtained by
resampling training sites with replacement and recomputing the median consensus,
it is equivalently the weighted median
\begin{equation}
d^{(b)}_{\mathrm{com},j}
=
\operatorname{Med}_{w^{(b)}}
\left(
\{d_{e,j}\}_{e\in\mathcal E_{\mathrm{tr}}}
\right),
\end{equation}
and the oracle counterpart is
\begin{equation}
d^{\star(b)}_{\mathrm{com},j}
=
\operatorname{Med}_{w^{(b)}}
\left(
\{d^\star_{e,j}\}_{e\in\mathcal E_{\mathrm{tr}}}
\right).
\end{equation}
The weights $w^{(b)}_e$ are the same for the empirical and oracle quantities.
Applying the weighted-median non-expansiveness bound gives
\begin{equation}
\left|
d^{(b)}_{\mathrm{com},j}
-
d^{\star(b)}_{\mathrm{com},j}
\right|
\le
\Delta_j,
\qquad
\forall b\in[B].
\end{equation}

By definition,
\begin{equation}
m^\star_j
=
\min
\left\{
|d^\star_{\mathrm{com},j}|,
\left||d^\star_{\mathrm{com},j}|-\tau_E\right|,
\min_{e\in\mathcal E_{\mathrm{tr}}}|d^\star_{e,j}|,
\min_{1\le b\le B}|d^{\star(b)}_{\mathrm{com},j}|
\right\}.
\end{equation}
If $\Delta_j<m^\star_j$, then
\begin{equation}
\Delta_j
<
|d^\star_{\mathrm{com},j}|,
\qquad
\Delta_j
<
|d^\star_{e,j}|,
\qquad
\Delta_j
<
|d^{\star(b)}_{\mathrm{com},j}|,
\end{equation}
for every $e\in\mathcal E_{\mathrm{tr}}$ and every $b\in[B]$. Since
\begin{equation}
|d_{e,j}-d^\star_{e,j}|
\le
\Delta_j
<
|d^\star_{e,j}|,
\end{equation}
the empirical site-level contrast cannot cross zero, and hence
\begin{equation}
\operatorname{sgn}(d_{e,j})
=
\operatorname{sgn}(d^\star_{e,j}),
\qquad
\forall e\in\mathcal E_{\mathrm{tr}}.
\end{equation}
Similarly,
\begin{equation}
|d_{\mathrm{com},j}-d^\star_{\mathrm{com},j}|
\le
\Delta_j
<
|d^\star_{\mathrm{com},j}|
\end{equation}
implies
\begin{equation}
\operatorname{sgn}(d_{\mathrm{com},j})
=
\operatorname{sgn}(d^\star_{\mathrm{com},j}),
\end{equation}
and
\begin{equation}
|d^{(b)}_{\mathrm{com},j}-d^{\star(b)}_{\mathrm{com},j}|
\le
\Delta_j
<
|d^{\star(b)}_{\mathrm{com},j}|
\end{equation}
implies
\begin{equation}
\operatorname{sgn}(d^{(b)}_{\mathrm{com},j})
=
\operatorname{sgn}(d^{\star(b)}_{\mathrm{com},j}),
\qquad
\forall b\in[B].
\end{equation}

Using the definitions of sign consistency and bootstrap stability, we therefore
obtain
\begin{align}
\kappa_j
&=
\frac{1}{|\mathcal E_{\mathrm{tr}}|}
\sum_{e\in\mathcal E_{\mathrm{tr}}}
\mathbb I
\left[
\operatorname{sgn}(d_{e,j})
=
\operatorname{sgn}(d_{\mathrm{com},j})
\right]                                                        \\
&=
\frac{1}{|\mathcal E_{\mathrm{tr}}|}
\sum_{e\in\mathcal E_{\mathrm{tr}}}
\mathbb I
\left[
\operatorname{sgn}(d^\star_{e,j})
=
\operatorname{sgn}(d^\star_{\mathrm{com},j})
\right]
=
\kappa^\star_j,
\end{align}
and
\begin{align}
\pi_j
&=
\frac{1}{B}
\sum_{b=1}^{B}
\mathbb I
\left[
\operatorname{sgn}(d^{(b)}_{\mathrm{com},j})
=
\operatorname{sgn}(d_{\mathrm{com},j})
\right]                                                        \\
&=
\frac{1}{B}
\sum_{b=1}^{B}
\mathbb I
\left[
\operatorname{sgn}(d^{\star(b)}_{\mathrm{com},j})
=
\operatorname{sgn}(d^\star_{\mathrm{com},j})
\right]
=
\pi^\star_j.
\end{align}

It remains to verify that the contrast-threshold decision is also preserved.
By the reverse triangle inequality,
\begin{equation}
\left|
|d_{\mathrm{com},j}|
-
|d^\star_{\mathrm{com},j}|
\right|
\le
\left|
d_{\mathrm{com},j}
-
d^\star_{\mathrm{com},j}
\right|
\le
\Delta_j.
\end{equation}
Since
\begin{equation}
\Delta_j
<
\left|
|d^\star_{\mathrm{com},j}|-\tau_E
\right|,
\end{equation}
the value $|d_{\mathrm{com},j}|$ cannot cross the threshold $\tau_E$.

Indeed, if $|d^\star_{\mathrm{com},j}|>\tau_E$, then
\begin{align}
|d_{\mathrm{com},j}|
&\ge
|d^\star_{\mathrm{com},j}|
-
\left|
|d_{\mathrm{com},j}|
-
|d^\star_{\mathrm{com},j}|
\right|                                                        \\
&\ge
|d^\star_{\mathrm{com},j}|-\Delta_j                            \\
&>
|d^\star_{\mathrm{com},j}|
-
\left(
|d^\star_{\mathrm{com},j}|-\tau_E
\right)
=
\tau_E.
\end{align}
Thus,
\begin{equation}
|d^\star_{\mathrm{com},j}|>\tau_E
\Longrightarrow
|d_{\mathrm{com},j}|>\tau_E.
\end{equation}
Conversely, if $|d^\star_{\mathrm{com},j}|<\tau_E$, then
\begin{align}
|d_{\mathrm{com},j}|
&\le
|d^\star_{\mathrm{com},j}|
+
\left|
|d_{\mathrm{com},j}|
-
|d^\star_{\mathrm{com},j}|
\right|                                                        \\
&\le
|d^\star_{\mathrm{com},j}|+\Delta_j                            \\
&<
|d^\star_{\mathrm{com},j}|
+
\left(
\tau_E-|d^\star_{\mathrm{com},j}|
\right)
=
\tau_E.
\end{align}
Therefore,
\begin{equation}
|d^\star_{\mathrm{com},j}|<\tau_E
\Longrightarrow
|d_{\mathrm{com},j}|<\tau_E.
\end{equation}
The equality case $|d^\star_{\mathrm{com},j}|=\tau_E$ is excluded by
$\Delta_j<m^\star_j$, since it would make
$\left||d^\star_{\mathrm{com},j}|-\tau_E\right|=0$ and hence
$m^\star_j=0$. Consequently,
\begin{equation}
\mathbb I
\left[
|d_{\mathrm{com},j}|>\tau_E
\right]
=
\mathbb I
\left[
|d^\star_{\mathrm{com},j}|>\tau_E
\right].
\end{equation}

Combining the preserved contrast-threshold decision with
$\kappa_j=\kappa^\star_j$ and $\pi_j=\pi^\star_j$, the empirical scaffold
decision satisfies
\begin{align}
\mathbf 1\{j\in\mathcal S\}
&=
\mathbb I
\left[
|d_{\mathrm{com},j}|>\tau_E,\;
\kappa_j\ge\eta_E,\;
\pi_j\ge\zeta_E
\right]                                                        \\
&=
\mathbb I
\left[
|d^\star_{\mathrm{com},j}|>\tau_E,\;
\kappa^\star_j\ge\eta_E,\;
\pi^\star_j\ge\zeta_E
\right]                                                        \\
&=
\mathbf 1\{j\in\mathcal S^\star\}.
\end{align}

\input{table/dataset_description}

\section{Datasets}
\label{app:data}
\subsection{Dataset Descriptions}

Table~\ref{tab:original_data} summarizes the demographic and clinical characteristics of the ABIDE, REST-meta-MDD, SRPBS, and ABCD datasets. Detailed descriptions of these datasets are provided below:

\textbf{ABIDE.} The Autism Brain Imaging Data Exchange (ABIDE)~\cite{abide} is a large multi-site resting-state fMRI consortium for autism research.
Starting from the quality-controlled ABIDE cohort, we construct a controlled cross-site OOD benchmark by selecting 10 sites that (i) include both individuals with autism spectrum disorder (ASD) and typically developing controls (TD), and (ii) provide sufficient samples per class to support reliable leave-one-site-out (LOSO) evaluation.
The retained benchmark contains 435 subjects (209 ASD / 226 TD), parcellated using the AAL-116 atlas~\cite{aal}.
These sites exhibit substantial heterogeneity in acquisition parameters and cohort composition, including sequence length ($T=140$--$209$ time points), demographics (site-level mean age: 10.0--34.4 years), and sex distribution (e.g., SBL and TRINITY are all-male, while other sites are mixed).
We adopt the same strict 10-fold LOSO protocol.
Table~\ref{tab:abide_data} summarizes the per-site statistics.

\textbf{REST-meta-MDD.}
The REST-meta-MDD consortium~\cite{restmeatmdd} aggregates resting-state fMRI data for major depressive disorder (MDD) across multiple clinical sites in China.
Starting from the released cohort, we construct the OOD benchmark by retaining the 10 largest sites with adequate class representation, requiring each diagnostic class to contain at least 15 subjects per site.
The final benchmark contains 1{,}596 subjects (751 HC / 845 MDD) parcellated using the AAL-116 atlas~\cite{aal}.
Compared with ABIDE, REST-meta-MDD provides a larger adult depression cohort with a female-majority composition (967 females / 629 males) and broader site-level age variation (mean age: 28.5--67.3 years).
It also exhibits pronounced site imbalance: site S20 contributes 533 subjects, accounting for approximately 33\% of the retained cohort, which creates a natural stress test for methods that must avoid large-site dominance.
In addition, site S25 has a distinctively older population (mean age 67.3), serving as a challenging OOD target.
The same strict 10-fold LOSO protocol is applied.
Table~\ref{tab:mdd_data} summarizes the per-site statistics.

\textbf{SRPBS.}
The Strategic Research Program for Brain Sciences (SRPBS) dataset~\cite{srpbs} is a multi-site, multi-disorder neuroimaging resource collected by the DecNef consortium in Japan, comprising resting-state fMRI, structural MRI, demographics, diagnostic labels, and site-specific acquisition protocols. In this paper, we focus on the MDD cohort. After site and batch filtering, we retain 10 site/batch environments with balanced diagnostic coverage, yielding 968 subjects (484 HC / 484 MDD), parcellated using the AAL-116 atlas~\cite{aal}. Compared with REST-meta-MDD, SRPBS provides an independent MDD benchmark with a distinct geographic population and acquisition setting, enabling evaluation of cross-site OOD robustness beyond a single consortium. The cohort spans a broad adult age range (18–80 years), with matched mean age between MDD and HC groups ($43.0 \pm 13.5$ vs.\ $43.0 \pm 13.7$) and a female-majority distribution (531 females / 437 males). SRPBS also exhibits substantial acquisition heterogeneity, with fMRI sequence lengths ranging from $T=107$ to $T=240$. For data collected at the same institution under different protocols, each batch is treated as an independent environment and assigned to a separate LOSO fold, rather than being merged. This design yields a stricter testbed for robustness to both site- and batch-level shifts. We adopt the same strict LOSO protocol. Table~\ref{tab:srpbs_data} summarizes the per-environment statistics.

\textbf{ABCD.} The Adolescent Brain Cognitive Development (ABCD) Study\textsuperscript{\textregistered}~\cite{abcd} is a large-scale, multi-site, longitudinal cohort designed to characterize brain development, cognition, and mental health from childhood into early adulthood. After quality control and site filtering, we retain 10 imaging sites with sufficient per-class coverage, yielding 425 subjects (213 ADHD / 212 HC), parcellated using the AAL-116 atlas~\cite{aal}. All retained sessions have a fixed sequence length ($T=383$ time points), and the cohort is narrowly concentrated around 9--10 years of age (mean $9.47 \pm 0.50$ years). Therefore, cross-site heterogeneity in this benchmark mainly arises from scanner/protocol variation, sex-ratio differences, and site-specific class composition, rather than broad age variation. We adopt the same strict leave-one-site-out (LOSO) protocol. Table~\ref{tab:abcd_data} summarizes the per-site statistics. Data used in the preparation of this article were obtained from the ABCD Study\textsuperscript{\textregistered}, available through the NIMH Data Archive (NDA). The ABCD Study is supported by the National Institutes of Health and additional federal partners under multiple award numbers, including U01DA041048, U01DA050989, and U01DA051016, among others. ABCD consortium investigators designed and implemented the study and/or provided data but did not necessarily participate in the analysis or writing of this report. This manuscript reflects the authors' views and does not necessarily represent those of the NIH or ABCD consortium investigators.

\subsection{Dataset Preprocessing}
\textbf{ABIDE.} 
For ABIDE, we use resting-state fMRI and phenotypic data from the Autism Brain Imaging Data Exchange~\cite{abide}. 
Preprocessed data are obtained from the Preprocessed Connectomes Project, which provides standardized outputs from multiple preprocessing pipelines, including ROI-level derivatives.
In this paper, we use AAL-116 ROI time series and formulate a binary classification task between autism spectrum disorder (ASD) and typically developing controls (TD).
Subjects are retained only if they have valid diagnostic labels, site identifiers, and usable ROI time series after quality control.
The resulting cohort is used to construct static and temporal edge-level features under the same leave-one-site-out evaluation protocol.

\textbf{REST-meta-MDD.}
For REST-meta-MDD, we use data from the REST-meta-MDD consortium, which aggregates resting-state fMRI from multiple clinical sites for major depressive disorder research. The shared derivatives are provided after standardized local preprocessing, rather than as raw imaging data~\cite{restmeatmdd}. We retain subjects with valid MDD or healthy-control labels, site identifiers, demographic information, and usable AAL-116 ROI time series. Sites with insufficient class coverage are excluded to ensure representation of both diagnostic groups within each retained site. The resulting cohort is used for MDD vs.\ healthy-control classification under a leave-one-site-out protocol.

\textbf{SRPBS.}
For SRPBS, we use multi-site resting-state fMRI data released by the Strategic Research Program for Brain Sciences (SRPBS) through the DecNef consortium~\cite{srpbs}. This resource provides neuroimaging data along with demographic, diagnostic, and acquisition information. To ensure consistency with other datasets, we convert the rs-fMRI data into AAL-116 ROI time series. Specifically, after applying the dataset-provided quality control and preprocessing, each subject’s rs-fMRI data are parcellated using the AAL-116 atlas~\cite{aal}, and the mean BOLD signal within each ROI is extracted to form $\bm{X}_i \in \mathbb{R}^{T_i \times 116}$. Subjects are retained only if they have valid diagnostic labels, site identifiers, demographic variables, and complete ROI time series. We focus on the MDD cohort and select 10 sites with balanced diagnostic coverage, yielding 968 subjects (484 HC / 484 MDD). The retained sites exhibit substantial acquisition heterogeneity, including varying sequence lengths (e.g., $T=107$, $143$, $240$), making SRPBS a challenging benchmark for cross-site OOD evaluation.

\textbf{ABCD.} For ABCD, we use data from the Adolescent Brain Cognitive Development (ABCD) Study\textsuperscript{\textregistered}~\cite{abcd}, available through the NIMH Data Archive (Release 4.0). We focus on the baseline resting-state fMRI session (\texttt{baseline\_year\_1\_arm\_1}). Each subject's rs-fMRI data are parcellated using the AAL-116 atlas~\cite{aal}, and the mean BOLD signal within each ROI is extracted to obtain $\bm{X}_i \in \mathbb{R}^{T_i \times 116}$. We formulate a binary classification task between attention-deficit/hyperactivity disorder (ADHD) and healthy control (HC), retaining subjects with valid diagnostic labels, site identifiers, demographic variables, and complete ROI time series. We further select the 10 imaging sites with sufficient per-class coverage, yielding 425 subjects (213 ADHD / 212 HC). All retained sessions have a fixed sequence length ($T=383$ time points), and the cohort is narrowly concentrated around 9--10 years of age. The resulting cohort is used as a pediatric cross-site OOD benchmark under the LOSO protocol.

\section{Baselines}
\label{app:baseline}
\subsection{Baseline Description}
We compare \method{} with fifteen representative baselines across four categories: general GNNs, general OOD methods, graph OOD methods, and brain networks.

\textbf{General GNNs.}
\begin{itemize}
    \item \textbf{GCN}~\cite{kipf2016gcn} is a spectral graph convolutional network derived from a first-order localized approximation of spectral graph convolutions, aggregating neighbor features via a symmetric normalized Laplacian.
    \item \textbf{GAT}~\cite{petar2018gat} introduces masked self-attention layers so that each node learns adaptive weights over its neighbors, alleviating the uniform-aggregation limitation of GCN.
    \item \textbf{GIN}~\cite{xu2018gin} adopts a sum aggregator together with an injective update function, matching the expressive power of the Weisfeiler--Lehman graph isomorphism test.
\end{itemize}

\textbf{General OOD methods.}
\begin{itemize}
    \item \textbf{IRM}~\cite{arjovsky2019irm} learns representations whose optimal classifier is invariant across training environments, penalizing environment-specific features for out-of-distribution generalization.
    \item \textbf{CORAL}~\cite{sun2016coral} aligns the second-order statistics (covariances) of source and target feature distributions, serving as a classic domain-adaptation/robustness regularizer.
\end{itemize}

\textbf{Graph OOD methods.}
\begin{itemize}
    \item \textbf{GSAT}~\cite{miao2022gsat} is an interpretable and generalizable graph learning framework that, motivated by the information bottleneck principle, injects stochasticity into attention weights to select task-relevant subgraphs.
    \item \textbf{DisC}~\cite{fan2022disc} debiases GNNs under severely biased graph data by disentangling causal and bias substructures via an edge-mask generator and synthesizing counterfactual unbiased samples.
    \item \textbf{CEPG}~\cite{wang2026cepg} promotes invariant information learning via environment-promoted training objectives, extracting features that remain predictive across diverse graph environments for OOD generalization.
    \item \textbf{DiSCO}~\cite{sun2026disco} employs a diverse and sparse mixture-of-experts over causal subgraphs, handling instance-level heterogeneous causal patterns for OOD graph learning.
\end{itemize}

\textbf{Brain Networks.}
\begin{itemize}
    \item \textbf{BrainNetTF}~\cite{kan2022brainnttf} is a Transformer tailored to brain networks, using connection profiles as node features together with an orthonormal clustering readout aligned with ROI functional modules.
    \item \textbf{AGMGC}~\cite{noman2025agmgc} learns connectivity directly from raw fMRI time series through adaptive graph construction, and combines SplineCNN with multi-graph convolutions to capture localized and global dependencies for brain disorder classification.
    \item \textbf{FC-HGNN}~\cite{gu2025fchgnn} is a two-stage end-to-end heterogeneous graph neural network built on brain functional connectivity, applying type-aware message passing over a heterogeneous brain graph to identify mental disorders.
    \item \textbf{XG-GNN}~\cite{qiu2024towards} combines meta-learning with explainability-generalizable regularization to learn domain-generalizable diagnostic GNNs and explanation factors from fMRI BOLD signals.
    \item \textbf{DeCI}~\cite{yu2026deci} moves beyond static functional connectivity by directly modeling raw fMRI time series, using cycle-and-drift decomposition with channel-independence for brain disorder classification.
    \item \textbf{BrainOOD}~\cite{xu2025brainood} is an OOD-generalizable brain network framework that couples a feature selector, a structure extractor, and an improved graph information bottleneck objective to recover causal subgraphs predictive across unseen sites.
\end{itemize}

%\section{More experimental settings}
%\label{app:exp_set}

\section{Complexity Analysis}
\label{app:complexity}

Here we analyze the computational complexity of the proposed \method{}. The computational cost is mainly dominated by static FC computation, scaffold-based transient profiling, and gated message passing on the line graph. Let $N$, $P$, and $T$ denote the number of subjects, ROIs, and time points, respectively. Let $M=\mathcal{O}(P^2)$ be the number of full FC edges, $M_S=|S|$ be the number of scaffold edges, and $R=\lfloor (T-W)/S_w \rfloor+1$ be the number of sliding windows. $B_{\mathrm{sz}}$ is the mini-batch size, $D$ is the embedding dimension, $L$ is the number of GNN layers, and $N_m$ is the maximum number of line-graph neighbors per scaffold edge. Static FC computation and scaffold extraction take $\mathcal{O}(NTP^2+NM)$ time complexity, while transient profiling over scaffold edges introduces an additional cost of $\mathcal{O}(NRWM_S)$. During training, gated message passing on the sparse line graph takes $\mathcal{O}(LB_{\mathrm{sz}}M_SN_mD)$ per mini-batch. As a result, for $N_{\mathrm{ep}}$ training epochs, the overall computational complexity of \method{} is $\mathcal{O}\left(NTP^2+NM+NRWM_S+N_{\mathrm{ep}}LNM_SN_mD\right)$.
Since $N_m$ is small in practice, the trainable cost scales approximately linearly with the scaffold size $M_S$.

\section{More Experimental Results}
\label{app:results}
\subsection{ID Performance Comparison}
\label{sec:id_performance}

\input{table/ID}

Although the proposed \method{} is primarily designed for cross-site OOD generalization, we also conduct experiments under in-distribution (ID) settings to assess whether the proposed OOD-oriented design compromises standard predictive performance. The ID setting uses random cross-validation within each dataset, where training and test samples are drawn from the same distribution, providing a complementary view under conventional brain-network classification scenarios.

As shown in Table~\ref{tab:main_results_id}, \method{} still outperforms the baselines under ID evaluation in most cases. In particular, it attains the best AUC on ABIDE, ABIDE (CC200), REST-meta-MDD, and SRPBS, and the best ACC on REST-meta-MDD, SRPBS, and ABCD. These results indicate that the proposed cross-site robustness mechanisms do not compromise in-distribution discriminative ability. Compared with generic GNNs and general OOD baselines, brain-network-specific methods typically yield stronger ID performance, highlighting the importance of modeling dynamic FC structure even without site shifts. However, their performance remains dataset-dependent, whereas \method{} demonstrates consistently strong results across autism, depression, and ADHD benchmarks. Overall, the ID results complement the OOD evaluation in the main text, showing that \method{} improves cross-site generalization while preserving strong standard classification performance.

\begin{figure}[t]
\centering
% 左侧整体：改为 [c] 垂直居中对齐
\begin{minipage}[c]{0.4\linewidth} 
    \centering
    \begin{subfigure}[b]{1\linewidth} % 左上
        \centering
        \includegraphics[width=\linewidth]{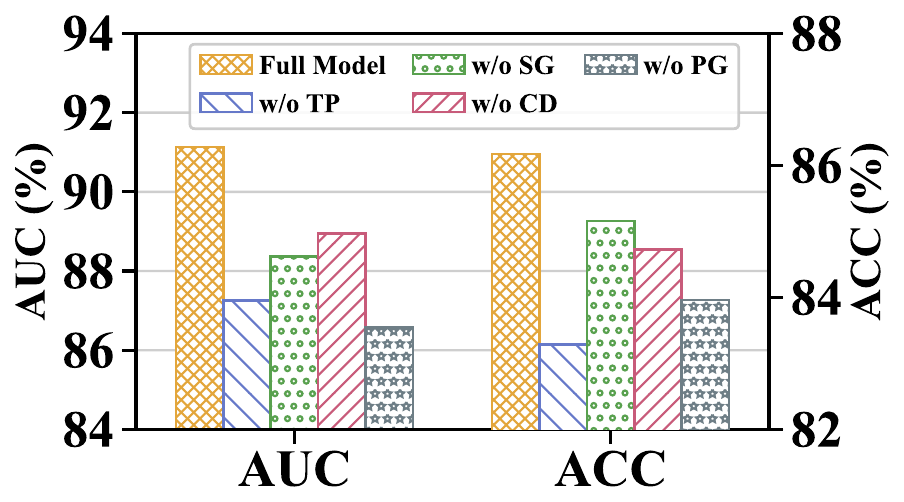}
        \caption{SRPBS}
    \end{subfigure}
    \par\vspace{1.5em} % 使用 \par 确保换行，统一设置为 1.5em
    \begin{subfigure}[b]{1\linewidth} % 左下
        \centering
        \includegraphics[width=\linewidth]{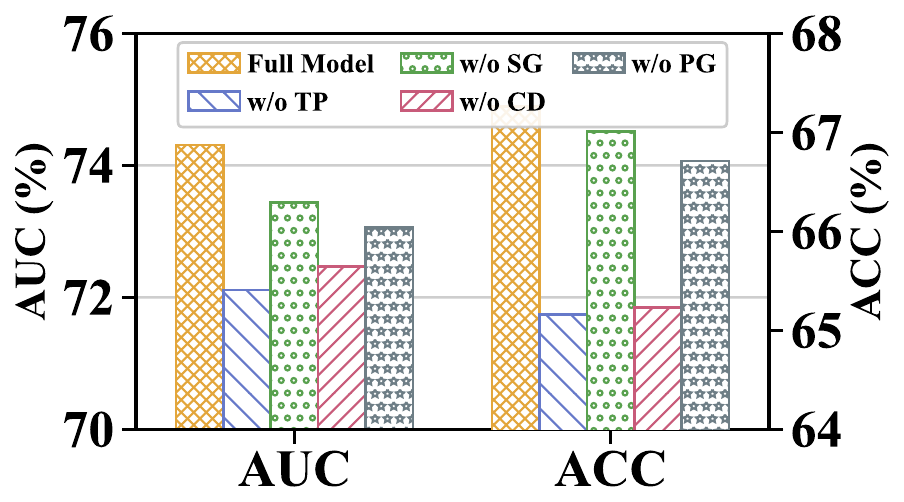}
        \caption{ABCD}
    \end{subfigure}
\end{minipage}
\hfill % 分隔左右
% 右侧整体：改为 [c] 垂直居中对齐
\begin{minipage}[c]{0.58\linewidth} 
    \centering
    \begin{minipage}[b]{1\linewidth} % 右侧上半部分
        \centering
        \begin{subfigure}[b]{0.48\linewidth}
            \centering
            \includegraphics[height=0.80\linewidth]{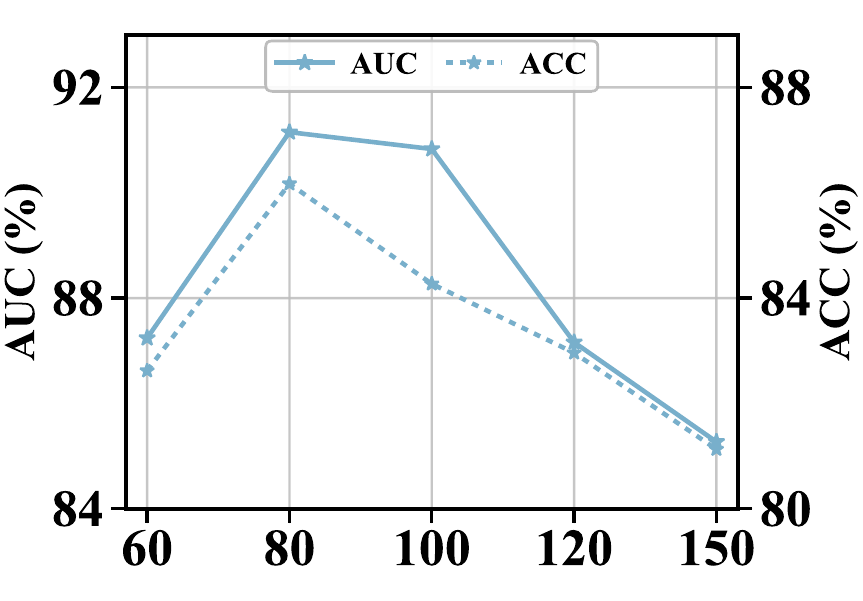}
            \caption{SRPBS $K$}
        \end{subfigure}
        \hfill
        \begin{subfigure}[b]{0.48\linewidth}
            \centering
            \includegraphics[height=0.80\linewidth]{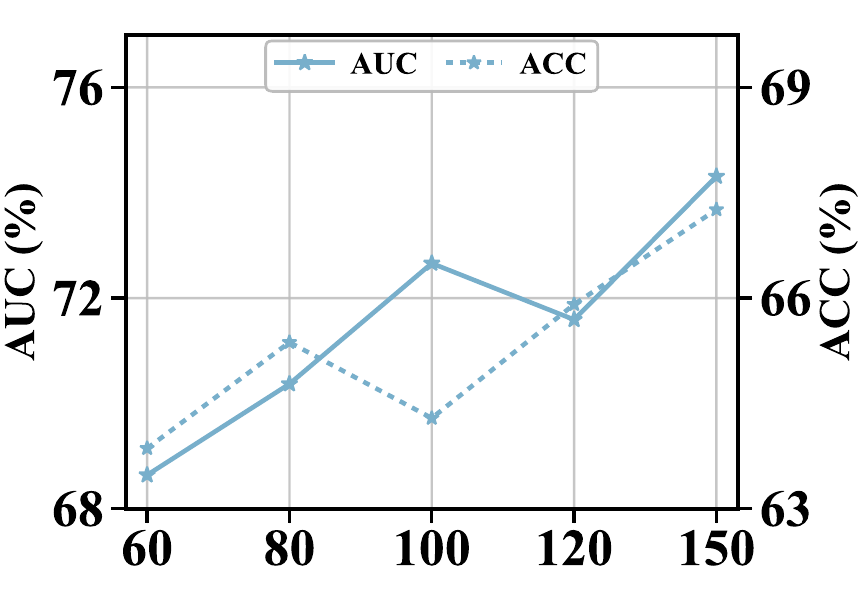}
            \caption{ABCD $K$}
        \end{subfigure}
    \end{minipage}
    \par\vspace{1.5em} % 与左侧保持一致的垂直间距
    \begin{minipage}[b]{1\linewidth} % 右侧下半部分
        \centering
        \begin{subfigure}[b]{0.48\linewidth}
            \centering
            \includegraphics[height=0.80\linewidth]{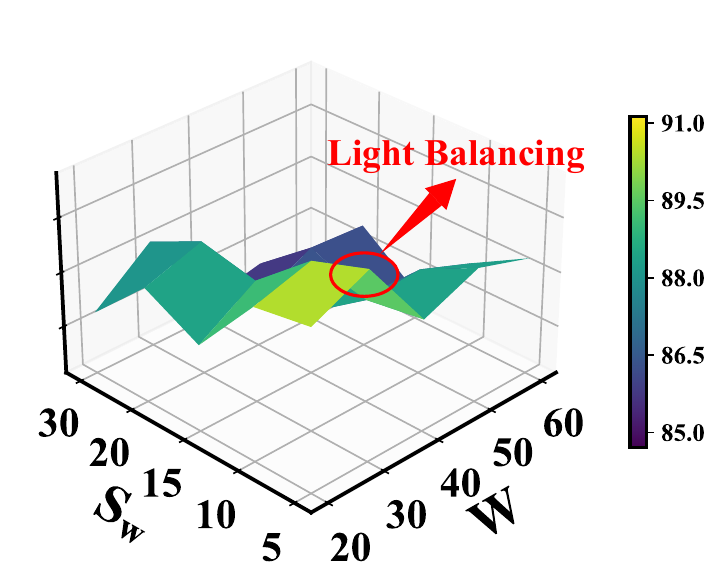}
            \caption{SRPBS ($W$, $S_w$)}
        \end{subfigure}
        \hfill
        \begin{subfigure}[b]{0.48\linewidth}
            \centering
            \includegraphics[height=0.80\linewidth]{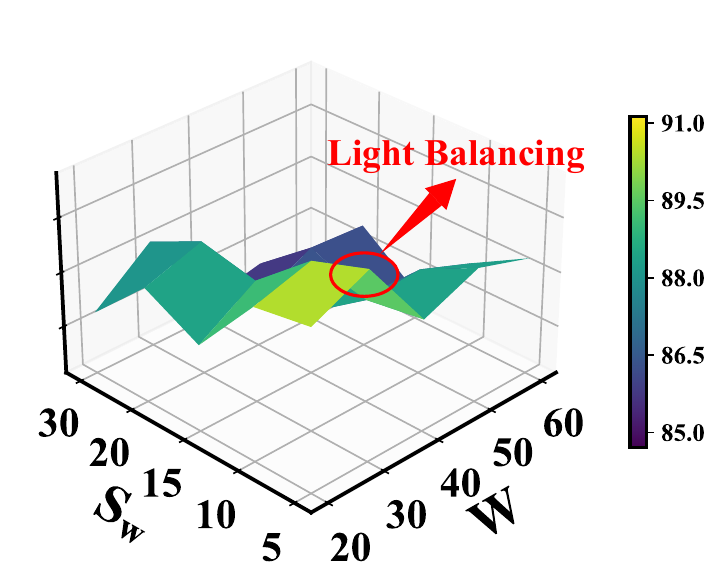}
            \caption{ABCD ($W$, $S_w$)}
        \end{subfigure}
    \end{minipage}
\end{minipage}
\caption{Ablation results in (a, b), gate-budget sensitivity $K$ in (c, d), and windowing sensitivity $(W, S_w)$ in (e, f) on SRPBS and ABCD.}
\label{fig:app_ablation}
\vspace{-0.6cm}
\end{figure}

\subsection{More Ablation Study}
\label{app:ablation}
We further present ablation results of \method{} under OOD settings on the SRPBS and ABCD datasets in Fig.~\ref{fig:app_ablation}(a,b). On SRPBS, the full model achieves the best performance in both AUC and ACC. Removing TP leads to a clear degradation, indicating that transient pathway descriptors provide complementary information beyond static FC. Excluding CD also reduces performance, highlighting the importance of deconfounding for suppressing site-related shortcuts. The declines observed for w/o PG and w/o SG further demonstrate that both scaffold-derived priors and subject-specific gating contribute to stable message passing. Notably, w/o PG results in one of the largest AUC drops, underscoring the role of population-level scaffold guidance under cross-site shift. On ABCD, a similar trend is observed for AUC: the full model performs best, and all ablated variants degrade. The impact is particularly pronounced for w/o TP and w/o CD, again supporting the roles of transient dynamics and confounder decoupling. For ACC, w/o TP and w/o CD also show noticeable drops, while w/o SG remains close to the full model. This suggests that subject-adaptive gating contributes more to ranking-based robustness, whereas its effect on thresholded accuracy can be less pronounced on ABCD.

\subsection{More Sensitivity Study}
\label{app:sensitivity}
We also present the sensitivity study of \method{} under OOD settings on the SRPBS and ABCD datasets.

As shown in Fig.~\ref{fig:app_ablation}(c), SRPBS exhibits a clear inverted-U trend as $K$ increases. Performance peaks around $K=80$, while both smaller and larger budgets degrade results. This aligns with observations in the main text: too small a budget may discard informative pathways, whereas too large a budget introduces redundant or less transferable scaffold nodes, weakening OOD robustness. On ABCD, Fig.~\ref{fig:app_ablation}(d) shows that performance generally improves with increasing $K$, with the best result at the largest evaluated budget. This suggests that ABCD benefits from broader pathway coverage, likely due to greater inter-subject variability and more distributed diagnostic patterns. Nevertheless, small budgets still underperform, indicating that insufficient pathway coverage limits subject-adaptive message passing. For temporal profiling, Fig.~\ref{fig:app_ablation}(e,f) shows that both SRPBS and ABCD favor moderate windowing with relatively fine temporal strides. Coarse temporal partitioning degrades performance, indicating that transient connectivity dynamics are informative for cross-site generalization, while overly aggressive segmentation may yield unstable dynamic FC estimates. 

\subsection{More Interpretability Analysis}\label{sec:interpre}
\input{table/top10_edge}

\textbf{Top Positive Scaffold Edges on ABIDE.} To assess biological plausibility and cross-site stability, we report the Top 10 positive scaffold edges in Table~\ref{tab:avg_edges}. Edges are ranked by the magnitude of the site-aggregated diagnostic contrast $|d_{\mathrm{com},j}|$, reflecting the strength of cross-site disease-related effects after site-wise deconfounding. We also report the average gate value, cross-site selection frequency $\nu_j$, and sign consistency $\kappa_j$, providing complementary evidence of whether each edge is both discriminative and consistently selected across held-out sites. The identified edges span a compact set of association, sensory, and subcortical regions, including inferior parietal and angular gyri, thalamic and temporal areas, occipital cortex, insula, putamen, and pallidum. These findings are broadly consistent with prior ASD resting-state connectivity studies, which report atypical connectivity in large-scale association networks, thalamocortical circuits, the salience network, and striatal systems~\cite{hull2017resting,lee2016abnormalities,di2011aberrant}. Notably, most edges exhibit high selection frequency and strong sign consistency, indicating that \method{} captures biologically meaningful and cross-site stable connectivity patterns rather than site-specific artifacts.

\textbf{Site-wise Positive Scaffold Edges on ABIDE.} To examine the consistency of discriminative connectivity patterns across held-out sites, we present a site-wise visualization of the top positive scaffold edges on ABIDE in Fig.~\ref{fig:site_edges}. Each row corresponds to one held-out site under the leave-one-site-out protocol. For each site, we visualize the strongest positive edges selected by \method{} from multiple anatomical views and list the corresponding ROI pairs with their edge strengths. This complements the aggregate analysis in the main text by revealing scaffold behavior at the site level. The results show that selected edges are sparse rather than broadly distributed, indicating that \method{} focuses on a compact set of discriminative connections. Although the top-ranked edges vary across sites, many consistently involve association, temporal, parietal, frontal, cingulate, occipital, and subcortical regions. This suggests that \method{} does not rely on a fixed global template, but instead adapts to site- and subject-specific patterns around a stable population scaffold. The recurrence of related functional systems across sites further supports that the identified connections reflect reproducible, disease-relevant signals rather than site-specific artifacts.

\begin{figure}[h]
    \centering
\includegraphics[width=1.0\linewidth]{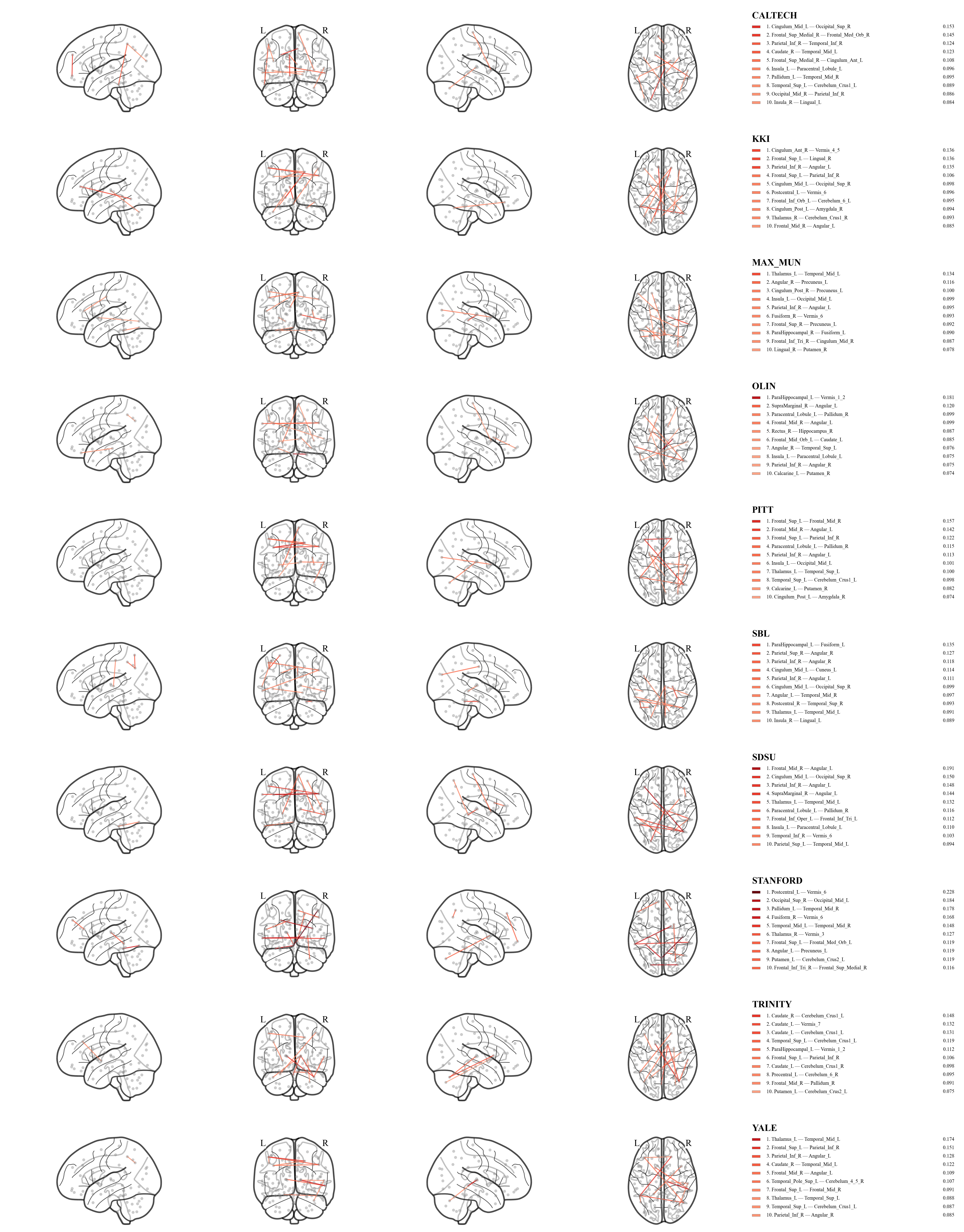}
    \caption{Top 10 positive scaffold edges on ABIDE under the OOD setting. Edge strength is measured by $|d_{\mathrm{com},j}|$. $\nu_j$ denotes cross-site selection frequency, and $\kappa_j$ denotes cross-site sign consistency.}
    \label{fig:site_edges}
    \vspace{-0.4cm}
\end{figure}

\subsection{Efficiency and Resource Consumption Analysis}\label{sec:time_and_gpu}

\begin{table}
\centering
\caption{Time consumption of different methods in the training stage for each epoch (in seconds).}
\vspace{0.2cm}
\label{tab:time}
\resizebox{0.55\linewidth}{!}{
\begin{tabular}{lcccc}
\toprule
Methods & ABIDE & REST-meta-MDD & SRPBS & ABCD \\
\midrule
BrainOOD& 0.65 & 2.48 & 0.82 & 0.58 \\
BrainNetTF & 0.35 & 1.25 & 0.45 & 0.32 \\
AGMGC& 0.32 & 1.18 & 0.42 & 0.30 \\
FC-HGNN& 0.15 & 0.35 & 0.18 & 0.12 \\
XG-GNN& 0.28 & 0.98 & 0.35 & 0.25 \\
\method{}& 0.15 & 0.38 & 0.20 & 0.12 \\
\bottomrule
\end{tabular}
}
\end{table}

\begin{table}
\centering
\caption{GPU memory consumption of different methods in the training stage (in GB).}
\vspace{0.2cm}
\label{tab:memory}
\resizebox{0.55\linewidth}{!}{
\begin{tabular}{lcccc}
\toprule
Methods & ABIDE & REST-meta-MDD & SRPBS & ABCD \\
\midrule
BrainOOD& 1.9 & 3.5 & 2.2 & 2.0 \\
BrainNetTF & 1.5 & 2.8 & 1.8 & 1.6 \\
AGMGC& 1.4 & 2.6 & 1.7 & 1.5 \\
FC-HGNN& 1.4 & 2.7 & 1.7 & 1.5 \\
XG-GNN& 1.6 & 3.0 & 1.9 & 1.7 \\
\method{}& 1.5 & 3.2 & 1.8 & 1.6 \\
\bottomrule
\end{tabular}
}
\end{table}

We further evaluate the training efficiency of different methods in terms of per-epoch training time and GPU memory consumption. As shown in Tables~\ref{tab:time} and~\ref{tab:memory}, \method{} achieves competitive efficiency across all datasets. In terms of training time, \method{} is consistently faster than BrainOOD~\cite{xu2025brainood}, BrainNetTF~\cite{kan2022brainnttf}, AGMGC~\cite{noman2025agmgc}, and XG-GNN~\cite{qiu2024towards}, while remaining comparable to the lightweight FC-HGNN baseline. This efficiency stems from performing message passing on a compact scaffold-level graph, which avoids redundant computation over dense brain connectivity. For GPU memory consumption, \method{} remains comparable to lightweight baselines and requires less memory than BrainOOD and XG-GNN in most cases. The slight increase compared to the most compact models is mainly due to the storage of transient pathway descriptors and subject-adaptive gating variables.

%% file: table/dataset_description.tex
\begin{table}[t]
    \centering
    \caption{Demographic and clinical characteristics of the ABIDE, REST-meta-MDD, SRPBS, and ABCD datasets after site filtering. ASD, TD, MDD, ADHD, and HC denote autism spectrum disorder, typically developing, major depressive disorder, attention-deficit/hyperactivity disorder, and healthy control, respectively.}
    \vspace{0.2cm}
    \label{tab:original_data}
    \resizebox{\textwidth}{!}{%
    \begin{tabular}{lcccc}
    \toprule
    \textbf{Characteristic} 
    & \textbf{ABIDE} 
    & \textbf{REST-meta-MDD} 
    & \textbf{SRPBS} 
    & \textbf{ABCD (ADHD-task)} \\
    \midrule
    Sample size 
    & 435 
    & 1{,}596 
    & 968 
    & 425 \\
    
    Class composition 
    & 209 ASD / 226 TD 
    & 845 MDD / 751 HC 
    & 484 MDD / 484 HC 
    & 213 ADHD / 212 HC \\
    
    Age (mean $\pm$ std) 
    & $18.15 \pm 9.53$ 
    & $39.28 \pm 16.08$ 
    & $42.93 \pm 13.58$ 
    & $9.47 \pm 0.49$ \\
    
    Sex (M/F) 
    & 367/68 
    & 629/967 
    & 437/531 
    & 246/179 \\
    
    \bottomrule
    \end{tabular}%
    }
    \vspace{-0.2cm}
\end{table}
    
\begin{table}[t]
    \centering
    \caption{Statistics of the ABIDE dataset. $T$ denotes the number of fMRI time points.}
    \vspace{0.2cm}
    \label{tab:abide_data}
    \begingroup
    \setlength{\tabcolsep}{4pt}
    \resizebox{0.6\textwidth}{!}{%
    \begin{tabular}{lrrrccr}
    \toprule
    Site & $N$ & TD & ASD & Age (mean $\pm$ std) & Male/Female & $T$ \\
    \midrule
    CALTECH  & 37  & 18  & 19  & $27.7 \pm 10.3$ & 29/8  & 146 \\
    KKI      & 48  & 28  & 20  & $10.0 \pm 1.3$  & 36/12 & 148 \\
    MAX\_MUN & 52  & 28  & 24  & $25.3 \pm 11.8$ & 48/4  & 140 \\
    OLIN     & 34  & 15  & 19  & $16.6 \pm 3.4$  & 29/5  & 206 \\
    PITT     & 56  & 27  & 29  & $18.9 \pm 6.9$  & 48/8  & 196 \\
    SBL      & 30  & 15  & 15  & $34.4 \pm 8.5$  & 30/0  & 196 \\
    SDSU     & 36  & 22  & 14  & $14.4 \pm 1.8$  & 29/7  & 176 \\
    STANFORD & 39  & 20  & 19  & $10.0 \pm 1.6$  & 31/8  & 209 \\
    TRINITY  & 47  & 25  & 22  & $17.0 \pm 3.4$  & 47/0  & 146 \\
    YALE     & 56  & 28  & 28  & $12.7 \pm 2.9$  & 40/16 & 196 \\
    \midrule
    \textbf{Total} & \textbf{435} & \textbf{226} & \textbf{209} &  & \textbf{367/68} &  \\
    \bottomrule
    \end{tabular}%
    }
    \vspace{-0.2cm}
    \endgroup
\end{table}

\begin{table}[t]
    \centering
    \caption{Statistics of the REST-meta-MDD dataset. $T$ denotes the number of fMRI time points.}
    \vspace{0.2cm}
    \label{tab:mdd_data}
    \begingroup
    \setlength{\tabcolsep}{4pt}
    \resizebox{0.6\textwidth}{!}{%
    \begin{tabular}{lrrrccr}
    \toprule
    Site & $N$ & HC & MDD & Age (mean $\pm$ std) & Male/Female & $T$ \\
    \midrule
    S1  & 148 & 74  & 74  & $31.8 \pm 8.5$  & 63/85   & 200 \\
    S10 & 83  & 33  & 50  & $31.9 \pm 11.0$ & 42/41   & 202 \\
    S15 & 100 & 50  & 50  & $46.7 \pm 16.4$ & 42/58   & 230 \\
    S19 & 87  & 36  & 51  & $37.3 \pm 11.0$ & 32/55   & 230 \\
    S20 & 533 & 251 & 282 & $39.2 \pm 14.7$ & 186/347 & 232 \\
    S21 & 156 & 70  & 86  & $35.3 \pm 12.6$ & 69/87   & 230 \\
    S25 & 152 & 63  & 89  & $67.3 \pm 6.7$  & 50/102  & 230 \\
    S7  & 87  & 49  & 38  & $41.9 \pm 12.6$ & 34/53   & 174 \\
    S8  & 150 & 75  & 75  & $28.6 \pm 10.9$ & 57/93   & 190 \\
    S9  & 100 & 50  & 50  & $28.5 \pm 8.6$  & 54/46   & 190 \\
    \midrule
    \textbf{Total} & \textbf{1596} & \textbf{751} & \textbf{845} &  & \textbf{629/967} &  \\
    \bottomrule
    \end{tabular}%
    }
    \vspace{-0.3cm}
    \endgroup
\end{table}

\begin{table}[t]
    \centering
    \caption{Statistics of the SRPBS dataset. $T$ denotes the number of fMRI time points.}
    \vspace{0.2cm}
    \label{tab:srpbs_data}
    \begingroup
    \setlength{\tabcolsep}{4pt}
    \resizebox{0.8\textwidth}{!}{%
    \begin{tabular}{lrrrccr}
    \toprule
    Site & $N$ & HC & MDD & Age (mean $\pm$ std) & Male/Female & $T$ \\
    \midrule
    HUH                                   & 114 & 57  & 57  & $38.8 \pm 13.1$ & 56/58  & 143 \\
    HRC                                   & 32  & 16  & 16  & $41.5 \pm 12.0$ & 10/22  & 143 \\
    HKH                                   & 56  & 28  & 28  & $45.9 \pm 9.7$  & 29/27  & 107 \\
    COI                                   & 140 & 70  & 70  & $48.3 \pm 13.4$ & 57/83  & 240 \\
    KUT                                   & 32  & 16  & 16  & $40.9 \pm 13.7$ & 16/16  & 240 \\
    UTO                                   & 124 & 62  & 62  & $43.1 \pm 14.4$ & 54/70  & 240 \\
    Univ.\ of Tokyo                       & 124 & 62  & 62  & $36.3 \pm 13.7$ & 66/58  & 240 \\
    Hiroshima Univ.\ Hospital             & 146 & 73  & 73  & $40.2 \pm 12.9$ & 63/83  & 143 \\
    COI Hiroshima Univ.                   & 142 & 71  & 71  & $48.3 \pm 12.7$ & 57/85  & 240 \\
    Hiroshima Kajikawa Hospital           & 58  & 29  & 29  & $44.7 \pm 9.9$  & 29/29  & 107 \\
    \midrule
    \textbf{Total} & \textbf{968} & \textbf{484} & \textbf{484} &  & \textbf{437/531} &  \\
    \bottomrule
    \end{tabular}%
    }
    \endgroup
    \vspace{-0.3cm}
\end{table}

\begin{table}[t]
    \centering
    \caption{Statistics of the ABCD dataset. $T$ denotes the number of fMRI time points.}
    \vspace{0.2cm}
    \label{tab:abcd_data}
    \begingroup
    \setlength{\tabcolsep}{4pt}
    \resizebox{0.6\textwidth}{!}{%
    \begin{tabular}{lrrrccr}
    \toprule
    Site & $N$ & HC & ADHD & Age (mean $\pm$ std) & Male/Female & $T$ \\
    \midrule
    site02 & 20  & 8   & 12  & $9.40 \pm 0.49$ & 13/7  & 383 \\
    site06 & 53  & 23  & 30  & $9.43 \pm 0.50$ & 30/23 & 383 \\
    site07 & 54  & 26  & 28  & $9.48 \pm 0.50$ & 34/20 & 383 \\
    site09 & 32  & 16  & 16  & $9.41 \pm 0.49$ & 20/12 & 383 \\
    site11 & 53  & 19  & 34  & $9.32 \pm 0.47$ & 29/24 & 383 \\
    site14 & 52  & 34  & 18  & $9.67 \pm 0.46$ & 22/30 & 383 \\
    site15 & 27  & 8   & 19  & $9.30 \pm 0.46$ & 16/11 & 383 \\
    site16 & 60  & 41  & 19  & $9.40 \pm 0.49$ & 33/27 & 383 \\
    site20 & 32  & 15  & 17  & $9.69 \pm 0.46$ & 17/15 & 383 \\
    site21 & 42  & 22  & 20  & $9.52 \pm 0.50$ & 32/10 & 383 \\
    \midrule
    \textbf{Total} & \textbf{425} & \textbf{212} & \textbf{213} &  & \textbf{246/179} &  \\
    \bottomrule
    \end{tabular}%
    }
    \vspace{-0.5cm}
    \endgroup
\end{table}

%% file: table/ID.tex
\begin{table*}[t]
    \centering
    \caption{Results on ABIDE, ABIDE (CC200), REST-meta-MDD, SRPBS, and ABCD (ADHD-task) under in-distribution (ID) settings. \textbf{Bold} results indicate the best performance.}
    \vspace{0.1cm}\label{tab:main_results_id}
    \resizebox{\textwidth}{!}{
    \begin{tabular}{cc |cc |cc |cc |cc |cc}
    \toprule
    \multirow{2}{*}{Type} & \multirow{2}{*}{Method}
    & \multicolumn{2}{c|}{ABIDE}
    & \multicolumn{2}{c|}{ABIDE (CC200)}
    & \multicolumn{2}{c|}{REST-meta-MDD}
    & \multicolumn{2}{c|}{SRPBS}
    & \multicolumn{2}{c}{ABCD (ADHD-task)} \\
                          &
    & AUC & ACC
    & AUC & ACC
    & AUC & ACC
    & AUC & ACC
    & AUC & ACC \\
    \midrule
    \multirow{3}{*}{\rotatebox{90}{\fontsize{9}{12}\selectfont GNN}}
    & GCN
    & 62.13$_{\pm 10.32}$ & 58.64$_{\pm 9.66}$
    & 50.43$_{\pm 8.86}$ & 50.41$_{\pm 8.55}$
    & 61.91$_{\pm 3.44}$ & 56.41$_{\pm 2.19}$
    & 83.23$_{\pm 4.21}$ & 74.87$_{\pm 5.19}$
    & 71.75$_{\pm 6.52}$ & 63.14$_{\pm 9.15}$ \\
    & GAT
    & 58.42$_{\pm 8.69}$ & 54.82$_{\pm 6.01}$
    & 57.90$_{\pm 9.37}$ & 56.71$_{\pm 7.74}$
    & 62.91$_{\pm 3.44}$ & 58.60$_{\pm 2.89}$
    & 81.02$_{\pm 4.87}$ & 69.58$_{\pm 5.42}$
    & 70.40$_{\pm 11.32}$ & 63.32$_{\pm 11.20}$ \\
    & GIN
    & 57.95$_{\pm 5.21}$ & 54.60$_{\pm 3.10}$
    & 52.61$_{\pm 8.83}$ & 49.86$_{\pm 10.23}$
    & 58.11$_{\pm 2.97}$ & 55.41$_{\pm 1.82}$
    & 81.62$_{\pm 4.77}$ & 70.39$_{\pm 5.61}$
    & 68.97$_{\pm 8.84}$ & 60.73$_{\pm 7.03}$ \\
    \midrule
    \multirow{2}{*}{\rotatebox{90}{\fontsize{9}{12}\selectfont OOD}}
    & IRM
    & 58.77$_{\pm 6.12}$ & 55.50$_{\pm 3.85}$
    & 51.98$_{\pm 10.64}$ & 52.86$_{\pm 6.99}$
    & 59.62$_{\pm 2.57}$ & 56.26$_{\pm 3.08}$
    & 79.14$_{\pm 4.43}$ & 69.19$_{\pm 4.52}$
    & 70.40$_{\pm 6.60}$ & 59.06$_{\pm 7.57}$ \\
    & CORAL
    & 55.69$_{\pm 6.12}$ & 52.75$_{\pm 5.49}$
    & 53.96$_{\pm 10.69}$ & 52.08$_{\pm 7.42}$
    & 58.85$_{\pm 3.52}$ & 56.07$_{\pm 3.46}$
    & 79.82$_{\pm 4.71}$ & 69.33$_{\pm 3.86}$
    & 69.48$_{\pm 7.70}$ & 62.55$_{\pm 10.46}$ \\
    \midrule
    \multirow{4}{*}{
    \rotatebox{90}{
    \shortstack{\fontsize{9}{12}\selectfont Graph\\OOD}
    }}
    & GSAT
    & 58.69$_{\pm 7.62}$ & 56.12$_{\pm 7.98}$
    & 51.02$_{\pm 7.05}$ & 51.07$_{\pm 6.27}$
    & 57.65$_{\pm 3.84}$ & 54.35$_{\pm 3.97}$
    & 81.69$_{\pm 3.25}$ & 70.87$_{\pm 4.62}$
    & 71.27$_{\pm 8.73}$ & 61.16$_{\pm 7.48}$ \\
    & DisC
    & 56.67$_{\pm 7.85}$ & 56.14$_{\pm 8.68}$
    & 53.06$_{\pm 6.48}$ & 52.77$_{\pm 6.41}$
    & 58.75$_{\pm 4.21}$ & 56.44$_{\pm 3.17}$
    & 81.91$_{\pm 4.39}$ & 72.46$_{\pm 3.83}$
    & 69.59$_{\pm 7.70}$ & 58.83$_{\pm 9.45}$ \\
    & CEPG
    & 51.04$_{\pm 8.99}$ & 50.03$_{\pm 6.12}$
    & 45.03$_{\pm 9.74}$ & 47.47$_{\pm 8.93}$
    & 56.36$_{\pm 4.72}$ & 54.26$_{\pm 4.15}$
    & 74.67$_{\pm 8.22}$ & 67.74$_{\pm 5.83}$
    & 63.11$_{\pm 11.06}$ & 57.07$_{\pm 9.59}$ \\
    & DiSCO
    & 56.78$_{\pm 6.72}$ & 55.47$_{\pm 4.83}$
    & 51.58$_{\pm 8.45}$ & 51.37$_{\pm 6.21}$
    & 59.41$_{\pm 4.86}$ & 55.77$_{\pm 2.87}$
    & 84.14$_{\pm 3.26}$ & 72.78$_{\pm 4.39}$
    & 67.51$_{\pm 9.83}$ & 61.98$_{\pm 7.44}$ \\
    \midrule
    \multirow{6}{*}{
    \rotatebox{90}{
    \shortstack{\fontsize{9}{12}\selectfont Brain\\Networks}
    }}
    & BrainNetTF
    & 62.85$_{\pm 6.20}$ & \textbf{60.90$_{\pm 7.06}$}
    & 57.04$_{\pm 11.21}$ & 52.55$_{\pm 8.08}$
    & 61.70$_{\pm 2.40}$ & 56.92$_{\pm 2.31}$
    & 78.43$_{\pm 7.78}$ & 69.39$_{\pm 6.61}$
    & 68.28$_{\pm 4.54}$ & 56.91$_{\pm 10.43}$ \\
    & AGMGC
    & 58.68$_{\pm 8.68}$ & 51.49$_{\pm 5.42}$
    & 55.67$_{\pm 7.72}$ & 50.85$_{\pm 5.23}$
    & 57.78$_{\pm 5.51}$ & 53.88$_{\pm 1.75}$
    & 85.64$_{\pm 3.91}$ & 75.83$_{\pm 3.15}$
    & 70.86$_{\pm 8.67}$ & 63.02$_{\pm 11.29}$ \\
    & FC-HGNN
    & 55.26$_{\pm 7.90}$ & 52.23$_{\pm 4.58}$
    & 55.04$_{\pm 9.71}$ & 53.21$_{\pm 5.45}$
    & 60.27$_{\pm 5.67}$ & 55.95$_{\pm 4.12}$
    & 76.07$_{\pm 5.48}$ & 67.22$_{\pm 6.21}$
    & 71.08$_{\pm 5.74}$ & 59.10$_{\pm 12.34}$ \\
    & XG-GNN
    & 60.41$_{\pm 7.73}$ & 54.12$_{\pm 4.09}$
    & 50.64$_{\pm 12.57}$ & 52.71$_{\pm 8.78}$
    & 59.31$_{\pm 3.99}$ & 55.33$_{\pm 3.17}$
    & 83.43$_{\pm 7.34}$ & 72.21$_{\pm 8.26}$
    & 71.20$_{\pm 5.46}$ & 55.66$_{\pm 9.78}$ \\
    & DeCI
    & 56.02$_{\pm 8.91}$ & \underline{60.78$_{\pm 2.89}$}
    & 58.55$_{\pm 3.92}$ & \textbf{61.93$_{\pm 3.72}$}
    & 53.85$_{\pm 2.94}$ & 54.39$_{\pm 2.50}$
    & \underline{86.63$_{\pm 5.71}$} & \underline{78.29$_{\pm 4.68}$}
    & 55.03$_{\pm 6.79}$ & 62.81$_{\pm 5.46}$ \\
    & BrainOOD
    & \underline{63.82$_{\pm 6.73}$} & 59.66$_{\pm 9.70}$
    & \underline{62.51$_{\pm 4.61}$} & 58.90$_{\pm 4.09}$
    & \underline{63.49$_{\pm 4.01}$} & \underline{60.01$_{\pm 3.11}$}
    & 85.11$_{\pm 4.02}$ & 75.92$_{\pm 4.14}$
    & \textbf{72.77$_{\pm 8.07}$} & \underline{64.36$_{\pm 7.24}$} \\
    \midrule
    & \method{}
    & \textbf{64.01$_{\pm 8.13}$} & 59.56$_{\pm 5.59}$
    & \textbf{66.11$_{\pm 6.94}$} & \underline{60.92$_{\pm 4.30}$}
    & \textbf{64.46$_{\pm 3.46}$} & \textbf{60.21$_{\pm 2.63}$}
    & \textbf{87.63$_{\pm 2.34}$} & \textbf{80.68$_{\pm 2.09}$}
    & \underline{72.30$_{\pm 5.74}$} & \textbf{66.84$_{\pm 5.97}$} \\
    \bottomrule
    \end{tabular}
    }
    \vspace{-0.2cm}
\end{table*}

%% file: table/top10_edge.tex
\begin{table}[t]
    \centering
    \caption{Top 10 positive scaffold edges on ABIDE under the OOD setting. Edge strength is measured by $|d_{\mathrm{com},j}|$. $\nu_j$ denotes cross-site selection frequency, and $\kappa_j$ denotes cross-site sign consistency.}
    \vspace{0.2cm}
    \label{tab:avg_edges}
    \resizebox{\textwidth}{!}{%
    \begin{tabular}{rllccccc}
    \toprule
    Rank & ROI 1 & ROI 2 & Sign & $|d_{\mathrm{com},j}|$ & Gate & $\nu_j$ & $\kappa_j$ \\
    \midrule
    1  & Parietal\_Inf\_R        & Angular\_L              & + & 0.0757 & 1.000 & 1.00 & 0.90 \\
    2  & Thalamus\_L             & Temporal\_Mid\_L        & + & 0.0721 & 0.973 & 1.00 & 0.80 \\
    3  & Frontal\_Mid\_R         & Angular\_L              & + & 0.0629 & 0.987 & 1.00 & 0.90 \\
    4  & Cingulum\_Mid\_L        & Occipital\_Sup\_R       & + & 0.0617 & 0.975 & 1.00 & 0.80 \\
    5  & Thalamus\_L             & Temporal\_Sup\_L        & + & 0.0566 & 0.782 & 0.92 & 0.90 \\
    6  & Paracentral\_Lobule\_L  & Pallidum\_R             & + & 0.0530 & 0.981 & 1.00 & 0.80 \\
    7  & Parietal\_Inf\_R        & Angular\_R              & + & 0.0512 & 0.865 & 0.97 & 0.90 \\
    8  & Lingual\_R              & Putamen\_R              & + & 0.0507 & 0.766 & 0.92 & 1.00 \\
    9  & Pallidum\_L             & Temporal\_Mid\_R        & + & 0.0473 & 0.858 & 1.00 & 0.70 \\
    10 & Insula\_L               & Paracentral\_Lobule\_L  & + & 0.0472 & 0.854 & 0.99 & 0.80 \\
    \bottomrule
    \end{tabular}%
    }
\end{table}